\definecolor{shadecolor}{RGB}{190,190,190}
\pgfplotsset{
  ylabel right/.style={
    after end axis/.append code={
      \node [rotate=90, anchor=north] at (rel axis cs:1,0.5) {#1};
    }
  }
}
\DeclareMathOperator*{\argmax}{argmax}
\DeclareMathOperator*{\argmin}{argmin}
\newtheorem{theorem}{Theorem}
\newtheorem{definition}{Definition}
\newtheorem{corollary}{Corollary}
\newcommand{\norm}[1]{\left\lVert#1\right\rVert}
\newcommand{\eqtext}[1]{\ensuremath{\stackrel{\text{#1}}{=}}}
\newcommand{\test}{\mathrm{test}}
\newcommand{\poisoning}{\mathrm{p}}
\newcommand{\clean}{\mathrm{c}}
\newcommand{\attack}{\mathrm{k}}
\newcommand{\xyen}{\{(x, y)\}^{\epsilon n}} % epsilon n repetition of (x, y)
\newcommand{\xyent}[1]{\{(x^{#1}, y^{#1})\}^{\epsilon n}} % epsilon n repetition of (x, y)
\newcommand{\D}{\mathcal{D}} % training set
\newcommand{\X}{\mathcal{X}} % feature set
\newcommand{\Y}{\mathcal{Y}} % labels
\renewcommand{\L}{\mathcal{L}} % empirical loss
\newcommand{\F}{\mathcal{F}} % a function that computes the feasible set of the attack from a given set
\newcommand{\R}{\mathbb{R}} % Real numbers
\newcommand{\E}{\mathbb{E}} % Expectation
\title{On Adversarial Bias and the Robustness of Fair Machine Learning}
\author{
  Hongyan Chang, Ta Duy Nguyen, Sasi Kumar Murakonda, Ehsan Kazemi$^\dagger$, Reza Shokri  \\
  National University of Singapore (NUS), $^\dagger$Google\\
  \texttt{\{hongyan, taduy, murakond, reza\}@comp.nus.edu.sg, ehsankazemi@google.com}
}
\date{}
\begin{document}

\maketitle

\begin{abstract}
Optimizing prediction accuracy can come at the expense of fairness.  Towards minimizing discrimination against a group, fair machine learning algorithms strive to equalize the behavior of a model across different groups, by imposing a fairness constraint on models.  However, we show that giving the same importance to groups of different sizes and distributions, to counteract the effect of bias in training data, can be in conflict with robustness.  We analyze data poisoning attacks against group-based fair machine learning, with the focus on equalized odds.  An adversary who can control sampling or labeling for a fraction of training data, can reduce the test accuracy significantly beyond what he can achieve on unconstrained models.  Adversarial sampling and adversarial labeling attacks can also worsen the model's fairness gap on test data, even though the model satisfies the fairness constraint on training data.  We analyze the robustness of fair machine learning through an empirical evaluation of attacks on multiple algorithms and benchmark datasets.
\end{abstract}

% !TEX root = ../main.tex
\section{Introduction}
\label{sec:intro}

Trustworthy algorithms are crucial components of machine learning frameworks in critical systems, as highlighted by many AI regulations and policies as well as technical research papers.  Algorithmic fairness is at the core of the trust requirements for automated decision making in sensitive domains, to avoid systemic discrimination against protected groups.  Many technical notions of fairness are proposed and many algorithms for enforcing such notions are designed~\cite{agarwal2018reductions, calders2009building, dwork2012fairness, hardt2016equality, kamishima2011fairness, kusner2017counterfactual, madras2018learning, zafar2015fairness, zafar2017fairness, zemel2013learning}.  Group fairness measures, such as equalized odds~\cite{hardt2016equality} which is the focus of this paper, suggest equalizing the model's behavior across groups that are identified based on a protected attribute (e.g., race or gender).  Fairness, however, has a cost on the model's performance, as the best decision rules that satisfy a definition of fairness differ from the optimal decision rules~\cite{corbett2017algorithmic}.  In this paper, we ask \emph{how adversarially adding a fraction of the training data can further increase the cost of fairness}.  

A large body of work shows machine learning models are vulnerable to various types of data poisoning attacks that can impose a large test loss on the target models~\cite{biggio2012poisoning, burkard2017analysis, chen2017targeted, gu2017badnets, jagielski2018manipulating, koh2017understanding, li2016data, mei2015security, mei2015using, shafahi2018poison, steinhardt2017certified, suciu2018does}.  Recent work studies the performance of fair machine learning in the presence of noise over a fraction of the training data~\cite{blum2019recovering, calders2013unbiased, de2018learning, jiang2019identifying, kallus2018residual, lamy2019noise}.  These assume a \emph{uniform} distribution of under-representation and labeling bias, or the analysis is limited to having an unlimited number of training data from the underlying distribution.   We present a detailed survey of the related work in Section~\ref{sec:relatedwork}.  To the best of our knowledge, this paper provides the first analysis of the robustness of fair machine learning algorithms in the adversarial setting, against data poisoning attacks.  This paper shows the implications of adversarial bias on fair machine learning, and calls for \emph{robust algorithmic fairness}.

We present a framework for designing data poisoning attack algorithms against models trained with equalized odds as the fairness constraint.  In our algorithms, we assume the attacker who can control the sampling process and (in the stronger case, also) the labeling process for some of the training data.  Our attacks effectively exploit the fact that fair algorithms equalize the importance of different groups (with different sizes and data distributions). This can change the influence of individual data samples in those groups in a disproportionate way, enabling the attacker to place poisoning data where it can impose a large loss on the trained model.  We extensively evaluate the robustness of fair machine learning on multiple fairness algorithms and benchmark datasets.  Here are the key findings in our empirical evaluation:

We show that there is a \emph{significant conflict between fairness and robustness}.  As we tighten the guaranteed fairness gap, we show that the susceptibility of fair models to data poisoning attacks increases.  Notably, enforcing exact fairness results in the largest drop in test accuracy under attack, much beyond what an adversary can achieve in unconstrained models.  We can observe this effect even for the case of the most limited adversary who can only control data sampling for a small fraction of the training data, without being able to change the features and labels.  We observe that the adversary achieves this by placing the poisoning data in the smallest group with the least frequent label.  To satisfy the fairness constraint, the model ends up sacrificing its generalizability over the majority group to equalize its prediction performance. 

The impact of our data poisoning attacks is not limited to reducing the test accuracy.  We show that the attack also results in a significant loss of fairness over test data.  Adversarial manipulation of the training data prevents the model to generalize its fairness to clean test data, even though it \emph{is} guaranteed on training data.  The attacker can influence the models to become even more discriminatory than unconstrained models, according to the fairness measure.

% !TEX root = ../main.tex

\section{Background and the Problem Statement}
\label{sec:backgrd}
\label{backgrd:problem_setting}
\label{backgrd:fairness_definition}
\label{backgrd:threat_model}

{\bf Machine learning.} Consider a binary classification model ${f_\theta: \X \to \Y}$, that maps the feature space $\X$ to binary labels ${\Y=\{+,-\}}$.  The model is represented by its parameters $\theta$ taken from a parameter space $\Theta$.  The model is trained to minimize a loss function ${\ell: \Theta \times \X \times \Y \to \R_+}$ over its training set $\D$.  We let $X$ and $Y$ to denote the random variables associated with the features and the labels, and $(X, Y)$ to denote the underlying distribution of the data.  We obtain the optimal parameters as ${\hat{\theta} = \argmin_{\theta \in \Theta} \frac{1}{|\D|} \L(\theta;\D)}$, where ${\L(\theta;\D) = \sum_{(x, y) \in \D} \ell(\theta; x, y)}$ is the cumulative loss of the model over the training set.  We quantify the accuracy of the trained model on a test dataset $\D_{\test}$.

{\bf Fairness.} We assume all data points are split into two groups based on a binary attribute $S \in \{0, 1\}$ (e.g., gender), referred to as the protected/sensitive attribute. This attribute could be part of the feature set $\X$.  In fair machine learning, our objective is to train a model such that its predictions are non-discriminatory and fair with respect to $S$.  To this end, the training process needs to be adjusted to equalize the prediction behavior of the model across the two groups~\cite{dwork2012fairness, hardt2016equality, keinberg2017inherent, zafar2017fairness, dwork2018decoupled}.  In this work, we focus on {\em equalized odds}, which is a widely-used definition for group fairness~\cite{hardt2016equality}.  A model is fair if, given the true label for a data point, the model's prediction on a data point and its sensitive attribute are conditionally independent.  We use a relaxed notion of equalized odds.
\begin{definition}[Equalized odds] \label{def:eo}
	A binary classifier $f_\theta$ is $\delta$-fair under equalized odds if
	\begin{align} \label{eq:eo-fairness}
		\Delta(\theta, \D) \coloneqq \max_{y\in\{0,1\}}|\Pr_\D[f_\theta(X)\neq y|S=0, Y=y] - \Pr_\D[f_\theta(X)\neq y|S=1, Y=y] | \le \delta ,
	\end{align}
	where, the probabilities are computed empirically over the training data set $\D$.  We refer to $\Delta$ as the model's empirical {\bf fairness gap}.  A model satisfies exact fairness when $\delta=0$.
\end{definition}

Fairness is achieved by ensuring $\delta$-fairness empirically on the model's training set, e.g., through minimizing the model's empirical loss under $\delta$-fairness as a constraint~\cite{agarwal2018reductions} or post-processing~\cite{hardt2016equality}.  We define the constraint ${C(\theta, \D) \coloneqq  \Delta(\theta, \D) - \delta \le 0}$ as a {\bf fairness constraint} for the model.

{\bf Data poisoning.} An adversary might be able to contaminate the training dataset in order to degrade the test accuracy of a classification model.  In this setting, we assume the training set is composed of the clean dataset $\D_\clean$ of size $n$, and the poisoning dataset $\D_\poisoning$ of size $\epsilon n$, which is contributed by the attacker.
The level of contamination is determined by $\epsilon$ (the ratio of the size of the poisoning data over the clean data in the training set).  The attacker's objective is to maximize the loss of the classifier over the data distribution (evaluated using the test dataset). This objective can be stated as a bi-level optimization problem subject to the fairness constraint $C(\theta, \D) \leq 0$ :
\begin{align} \label{eq:poisoning_attack_fair}
	\max_{\D_\poisoning}  \E_{(X, Y)}[\ell(\hat{\theta}; X, Y)], \mbox{ where } \hat{\theta} \coloneqq \argmin_{\theta \in \Theta} \frac{\mathcal{L}(\theta;\D_\clean \cup \D_\poisoning)}{|\D_\clean \cup \D_\poisoning|}, \mbox{ such that } C(\theta, \D_\clean \cup \D_\poisoning) \le 0,
\end{align}
where the expectation is taken over the underlying distribution of the (clean) data.

{\bf Problem statement.} The primary research question that we investigate in this paper is whether, how, and why training a model with (equalized odds) fairness can compromise its robustness to data poisoning attacks, compared with unconstrained models (without any fairness constraint).

We assume that the attacker has access to $\D_\clean$, and knows the learning task, the structure of the classification model, the learning hyper-parameters, and the fairness constraint on the target model.  A strong threat model involves the attacker that can craft any arbitrary poisoning data $\D_\poisoning$.  In this paper, however, we focus on a more restricted yet more realistic attack scenario, where the attacker is restricted to select the feature vector of the poisoning data from an attack dataset $\D_\attack$, which is sampled from the same  underlying distribution of the clean dataset.  Two variations of the attack are {\bf adversarial~sampling}, where $\D_\poisoning \subset \D_\attack$, and {\bf adversarial~labeling}, where ${\D_\poisoning \subset \D_\attack \cup \{ (x, 1-y): (x, y) \in \D_\attack\}}$.  Adversarial labeling is a more powerful attack as the attacker is also allowed to craft the labels (via label flipping) for generating the poisoning data.

The rationale for considering adversarial sampling and adversarial labeling attacks is that, in a realistic scenario, the attacker might not be able to craft feature vectors (i.e., generate fake loan applications).  However, he could be part of a system that can introduce a bias in the sampling process for the training data, by ignoring some samples and including the others.  In addition to this, the attacker could also be capable of influencing the decision making for some data points (i.e., producing labels) which later will be used as part of the target model's training set.  Our objective is to design these forms of poisoning attacks that introduce an {\bf adversarial bias} into the training set of fair models.

% !TEX root = ../main.tex

\section{Optimization Framework for Adversarial Bias}
\label{sec:attack}
\label{sec:framework}

The bi-level optimization problem~\eqref{eq:poisoning_attack_fair} is non-convex and intractable~\cite{bard1991some, hansen1992new, deng1998complexity}. The fairness constraint in the inner optimization makes the problem even more difficult.\footnote{We would like to point out that, for the unconstrained model, under the convex assumption of the loss function, it is possible for the attacker to find the approximate solution  by replacing the inner optimization with its stationarity (KKT) condition~\cite{biggio2012poisoning, koh2017understanding, mei2015using}.}
In this section, we present a number of approximations for problem~\eqref{eq:poisoning_attack_fair} which enables us to design effective attack algorithms.

We first approximate the loss function (which is maximized in the outer optimization of \cref{eq:poisoning_attack_fair}) by following the same techniques used for designing poisoning attacks against unconstrained models~\cite{steinhardt2017certified}.  For this reason, let $\hat{\theta}$ be the solution to the inner optimization in \eqref{eq:poisoning_attack_fair}.  We use the loss on the clean training data as an approximation for the loss over the underlying data distribution (of test data).
\begin{align} \label{eq:approximations}
	\E_{(X, Y)}[\ell(\hat{\theta}; X, Y)] \approx \frac{\L(\hat{\theta};\D_\test)}{|\D_\test|} \approx \frac{\L(\hat{\theta};\D_\clean)}{|\D_\clean|} \leq \frac{\L(\hat{\theta};\D_\clean \cup \D_\poisoning)}{|\D_\clean|}
\end{align}
The inequality provides a valid upper bound, as the loss function $\ell$ is non-negative.  Note that this bound becomes tighter if the fair model $f_\theta$ fits the poisoned training dataset well.
With the same line of reasoning, we replace the objective of the inner minimization in \cref{eq:poisoning_attack_fair} with $\nicefrac{\mathcal{L}(\theta;\D_\clean \cup \D_\poisoning)}{n}$, where $n$ is the size of the clean training dataset $\D_\clean$.

The fairness constraint in the inner optimization of \cref{eq:poisoning_attack_fair} makes it hard to track the influence of $\D_\poisoning$ on the training loss.
We use a Lagrange multiplier $\lambda \in \R_+$ to replace the constraint for the inner optimization problem with a penalizing term:
\begin{align} \label{eq:lagrange_dual_problem}
	 \min_{\theta \in \Theta} \left[ \frac{\mathcal{L}(  \theta;\D)}{n}, \mbox{ s.t. } C(\theta, \D) \le 0 \right]
	  & = \min_{\theta \in \Theta}   \max_{\lambda \in \R_+}  \left( \frac{\L(\theta;\D)}{n}  +\lambda C(\theta, \D)\right)  \nonumber \\
&	 \geq  \max_{\lambda \in \R_+}  \min_{\theta \in \Theta}  \left( \frac{\L(\theta;\D)}{n}  +\lambda C(\theta, \D)\right)  \enspace,
\end{align}
where ${\D = \D_\clean \cup \D_\poisoning}$. The last inequality in \cref{eq:lagrange_dual_problem} follows from the weak duality theorem~\cite{boyd2004convex}.

Based on the dual problem, the attacker can try to find a poisoning dataset $\D_\poisoning$ by maximizing a lower bound provided by the Lagrangian function $\min_{\theta \in \Theta}  \left( \nicefrac{\L(\theta;\D)}{n}  +\lambda C(\theta, \D)\right)$ for a fixed $\lambda \in \R_+$.  Indeed, maximizing the lower bound provided by the Lagrangian function would result in a solution with a high loss (which is guaranteed to be at least equal to the loss for the lower bound) for the original problem.

In this optimization procedure, we can also replace the fairness constraint $C(\theta, \D) \coloneqq  \Delta(\theta, \D) -  \delta$ with the fairness gap $\Delta(\theta, \D)$, because the constant value $\delta \geq 0$ does not affect the solution for the Lagrangian.
Finally, by considering all the above-mentioned steps, the new attacker's objective is:
\begin{align} \label{eq:simplified_dual}
	\max_{\D_\poisoning}  \min_{\theta \in \Theta} \left( \frac{\L(\theta; \D_\clean \cup \D_\poisoning)}{n} +  \lambda \Delta(\theta, \D_\clean \cup \D_\poisoning) \right) .
\end{align}
Thus, the goal is to find a poisoning dataset that maximizes a linear combination of the training loss and the model's violation from the fairness constraint, where $\lambda$ controls the penalty for the violation.

% !TEX root = ../main.tex

\section{Attack Algorithms}
\label{sec:algorithms}

In \cref{sec:framework}, we explained how the objective of an optimal attacker could be interpreted as  \eqref{eq:simplified_dual}.
Towards solving \eqref{eq:simplified_dual}, the attacker needs to overcome a number of subtle challenges: the loss function and the constraint are non-convex functions of the model parameter $\theta \in \Theta$, and the fairness gap is not an additive function of the training data points $(x,y) \in \D$.  These two keep little hope to solve \cref{eq:simplified_dual} without further assumptions.
To overcome these issues, in ~\cref{sec:certified}, we find an approximation for the fairness gap which is additive in the training data points.
By using this additivity property,  we design an online algorithm for the data poisoning attack.
We further prove the optimality of our algorithm for this modified objective under some reasonable conditions.
In \cref{sec:regular-attack}, we present another variant of such online algorithms, which under the Lagrange multiplier $\lambda = 0$ is equivalent to the prior poisoning attack algorithms against unconstrained models~\cite{steinhardt2017certified, koh2018stronger}.

\subsection{An Approximation for the Fairness Gap}
\label{sec:certified}

To design an additive approximation of the fairness gap $\Delta$, we consider the contribution of each training data point to the fairness gap independently.
Let $\{(x, y)\}^{k}$ be the $k$ repetition of a single data point $(x, y)$. Thus $\D  \cup  \{(x, y)\}^{k}$ is equivalent to adding $k$ copies of the data point $(x, y)$ to set $\D$.
In this setting, for any data point $(x,y) \in \D_\poisoning$, ${\Delta\left(\theta,  \D_\clean  \cup  \{(x, y)\}^{\epsilon n} \right)}$ is a proxy for measuring the contribution of that data point to the fairness gap ${\Delta(\theta,  \D_\clean  \cup  \D_\poisoning)}$, and its maximum over all data points in $ \D_\poisoning$ provides an upper bound on the fairness gap of the model when the size of the poisoning set is $\epsilon n$.  Thus, we get an approximation for the fairness gap as follows:
\begin{align}\label{eq:contr_proxy}
	\Delta\left(\theta, \D_\clean  \cup \D_\poisoning \right) \approx  \frac{1}{\epsilon n}  \sum_{(x,y) \in \D_\poisoning}  \Delta\left(\theta, \D_\clean  \cup \xyen \right) .
\end{align}

By substituting the fairness gap with its surrogate function, the objective of the attacker is to solve:
\begin{align} \label{eq:surrogate-optimization}
	M^* = \max_{\D_\poisoning}  \,\, \underbrace{\min_{\theta \in \Theta}
	\left(	\frac{1}{n} \L \left( \theta ; \D_{\clean} \cup \D_\poisoning \right)  + \frac{\lambda}{\epsilon n} \cdot \sum_{(x,y) \in \D_\poisoning} \Delta \left( \theta, \D_\clean  \cup \xyen \right) \right)}_{M(\D_\poisoning)} ,
\end{align}
where $M(\D_\poisoning)$ is the  loss which is incurred  by any poisoning dataset $\D_\poisoning$ on the fair model, and $M^*$ is the maximum loss under the optimal attack.

Algorithm~\ref{algorithm:OGD},  a variant of the online gradient descent methods~\cite{hazan2016introduction}, is our solution to the problem \eqref{eq:surrogate-optimization}.
 It initializes a model $\theta^0 \in \Theta$, and identifies $\epsilon n$ poisoning data points iteratively. The feasible set of poisoning points $\F(\D_\attack)$ is determined by the capabilities of the attacker.  For adversarial sampling attacks, we have ${\F(\D_\attack) = \D_\attack}$, and for adversarial labeling attacks, we have ${\F(\D_\attack) = \D_\attack \cup \{ (x, 1-y): (x, y) \in \D_\attack\}}$.  The algorithm iteratively performs the following steps:

{\bf Data point selection.} (Algorithm~\ref{algorithm:OGD}, line~\ref{line:OGD:pick}): It selects a data point with the highest impact on a weighted sum of the loss function and the fairness gap with respect to the model parameter $\theta^{t-1}$.

{\bf Parameter update.} (Algorithm~\ref{algorithm:OGD}, line~\ref{line:OGD-update}): The parameters are updated to minimize the penalized loss function based on the selected data point $(x^t, y^t)$.  In this way, the algorithm (through the approximations made by the Lagrange multiplier and the surrogate function) keeps track of the fair model under the set of already selected poisoning data points by the attack.

\begin{algorithm}[t]
	\caption{Online Gradient Descent Algorithm for Generating Poisoning Data for Fair Models}
	\begin{algorithmic}[1]
		\State {\bfseries Input:} Clean data $\D_\clean$, $n = |\D_\clean|$, feasible poisoning set $\F(\D_\attack)$, number of poisoning data $\epsilon n$, penalty parameter (Lagrange multiplier) $\lambda$, learning rate $\eta$.
		\State {\bfseries Output:} Poisoning dataset $\D_\poisoning$.
		\State Initialize $\theta^0 \in \Theta$
		\For{$ t = 1,\cdots,\epsilon n$}
		\State  $ (x^t, y^t) \gets \argmax_{(x,y) \in \F(\D_\attack)}
		\left[ \epsilon \cdot \ell (\theta^{t-1}; x,y) + \lambda \cdot \Delta\left(\theta^{t-1}, \D_\clean \cup \xyen \right)  \right] $ \label{line:OGD:pick}
		\State $\D_\poisoning \leftarrow \D_\poisoning \cup \{(x^t, y^t)\}$
		\State $\theta^t \leftarrow \theta^{t-1} -
		\eta \left( \frac{\nabla \L (\theta^{t-1}; \D_\clean )}{n}   +
		\nabla \left[ \epsilon \cdot \ell (\theta^{t-1};  x^t,y^t)
		+\lambda \cdot \Delta(\theta^{t-1}, \D_\clean  \cup \xyent{t}   ) \right] \right)$
		\label{line:OGD-update} 
		\EndFor
	\end{algorithmic}\label{algorithm:OGD}
\end{algorithm}

In \cref{theorem:bound-regret}, by following the approach proposed by \cite{steinhardt2017certified}, we relate the performance of Algorithm~\ref{algorithm:OGD} with the loss of the optimal attack for  \cref{eq:surrogate-optimization}.
Moreover, in \cref{sec:regret-approximation}, we prove that under some reasonable conditions (e.g., by using similar assumptions made by \cite{donini2018empirical} to approximate the fairness gap), our algorithm finds the (nearly) optimal solution for \cref{eq:surrogate-optimization}.

\begin{theorem} \label{theorem:bound-regret}
	Let $ \D^*_\poisoning$ be the data poisoning set produced by Algorithm~\ref{algorithm:OGD}.  Let $\mathrm{Regret}(\epsilon n)$ be the regret of this online learning algorithm after $\epsilon n$ steps.
	The performance  of the algorithm is guaranteed by
	\begin{align}
		M^* - M(\D^*_\poisoning)
		\leq \frac{\mathrm{Regret}(\epsilon n )}{\epsilon n } ,
	\end{align}
	where, $M^*$ and $M(\D^*_\poisoning)$ represent the loss of the fair model under the optimal data poisoning attack and the poisoning set $\D^*_\poisoning$, respectively.
\end{theorem}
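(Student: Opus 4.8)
The plan is to recast the surrogate problem~\eqref{eq:surrogate-optimization} as a repeated min-max game and run the standard online-learning-to-batch argument of~\cite{steinhardt2017certified}, exploiting the fact that the surrogate fairness gap is \emph{additive} in the poisoning points. First I would define, for each round $t$, the per-round loss faced by the $\theta$-player,
\[
f_t(\theta) \coloneqq \frac{1}{n}\L(\theta;\D_\clean) + \epsilon\cdot\ell(\theta;x^t,y^t) + \lambda\cdot\Delta\!\left(\theta,\D_\clean\cup\xyent{t}\right),
\]
and verify two bookkeeping facts: (i) the parameter update in line~\ref{line:OGD-update} is exactly online gradient descent $\theta^t = \theta^{t-1} - \eta\nabla f_t(\theta^{t-1})$; and (ii) the selection rule in line~\ref{line:OGD:pick} best-responds by choosing $(x^t,y^t)$ to maximize $f_t(\theta^{t-1})$ over $\F(\D_\attack)$, the clean-loss term being independent of the chosen point. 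A direct expansion then shows that the time-average of $f_t$ reconstructs the objective, so that $M(\D^*_\poisoning) = \min_{\theta\in\Theta}\frac{1}{\epsilon n}\sum_{t=1}^{\epsilon n} f_t(\theta)$; the set produced by the algorithm is thus evaluated by exactly the quantity the regret analysis controls.

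Next I would invoke the definition of regret, $\mathrm{Regret}(\epsilon n) = \sum_{t} f_t(\theta^{t-1}) - \min_{\theta}\sum_{t} f_t(\theta)$, to rewrite $M(\D^*_\poisoning) = \frac{1}{\epsilon n}\sum_t f_t(\theta^{t-1}) - \frac{\mathrm{Regret}(\epsilon n)}{\epsilon n}$. It then remains to lower-bound the realized average $\frac{1}{\epsilon n}\sum_t f_t(\theta^{t-1})$ by $M^*$. For this I would fix an optimal poisoning set $\D_\poisoning^{\mathrm{opt}}\subset\F(\D_\attack)$ attaining $M^*$ and use the crucial step: at each round the best-response value is a maximum over single feasible points, hence at least the average of the same per-point quantity taken over the $\epsilon n$ points of $\D_\poisoning^{\mathrm{opt}}$. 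Because the surrogate fairness gap~\eqref{eq:contr_proxy} is additive across poisoning points, this average is precisely the full objective $\frac{1}{n}\L(\theta^{t-1};\D_\clean\cup\D_\poisoning^{\mathrm{opt}}) + \frac{\lambda}{\epsilon n}\sum_{(x,y)\in\D_\poisoning^{\mathrm{opt}}}\Delta(\theta^{t-1},\D_\clean\cup\xyen)$ evaluated at $\theta^{t-1}$, which is at least $\min_\theta(\cdots) = M^*$. This yields $f_t(\theta^{t-1})\geq M^*$ for every $t$, and averaging gives $\frac{1}{\epsilon n}\sum_t f_t(\theta^{t-1})\geq M^*$.

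Combining the two displays gives $M(\D^*_\poisoning)\geq M^* - \frac{\mathrm{Regret}(\epsilon n)}{\epsilon n}$, which rearranges to the claim. I expect the main obstacle to be the ``maximum-dominates-average'' step: it is valid only because the additive approximation of the fairness gap lets the per-round objective decompose point by point, so that best-responding against one point simultaneously controls the contribution of every point in the optimal set; the same argument would fail for the true, non-additive gap $\Delta(\theta,\D_\clean\cup\D_\poisoning)$, which is exactly why~\eqref{eq:contr_proxy} was introduced. A secondary point to track carefully is the scaling: the factor $\epsilon$ on the per-point loss and the $1/(\epsilon n)$ weighting of the penalty must be handled so that summing $f_t$ over the $\epsilon n$ rounds reproduces $\epsilon n\cdot M(\cdot)$ exactly, including the clean-loss term that recurs identically in each round. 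The regret term itself needs no further treatment here, since the statement is expressed directly in terms of $\mathrm{Regret}(\epsilon n)$ and the sublinear-regret guarantee for online gradient descent is standard~\cite{hazan2016introduction}.
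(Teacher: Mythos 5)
Your proposal is correct and follows essentially the same route as the paper's proof: defining per-round functions whose average reconstructs $M(\cdot)$, observing that the greedy selection step makes the algorithm's realized per-round value a best response (which dominates the value of any fixed optimal poisoning set by the additivity of the surrogate gap, hence is at least $M^*$), and then invoking the definition of regret to compare the realized average against $M(\D^*_\poisoning)$. The only cosmetic difference is that the paper packages your ``maximum-dominates-average'' step as an application of the max--min inequality together with auxiliary quantities $U(\theta)$ and $U^*=\min_t U(\theta^t)$, whereas you unroll that inequality explicitly; the content is identical.
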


\subsection{A Surrogate Function for the Target Model}
\label{sec:regular-attack}
In this section, we present a second algorithm that differs from Algorithm~\ref{algorithm:OGD} in its parameter update step, but is the same in its data point selection approach.
Indeed, in Algorithm~\ref{algorithm:OGD}, the parameter update step provides an approximation (through adding the fairness constraint as a penalizing term and approximating the fairness gap) for the target fair model.
 An alternative strategy, for an attacker, could be iteratively adding data points that maximize a combination of the loss and the fairness gap, however, over the parameters of the unconstrained model. In this case, $\theta^t$ would represent the parameters of an unconstrained model, without considering the fairness constraint.  Thus we update the parameters of the model in the direction of decreasing the loss for the unconstrained model:
\begin{align} \label{eq:update-alg2}
	\theta^t \leftarrow \theta^{t-1} - \eta \left( \frac{\nabla \L (\theta^{t-1}; \D_\clean )}{n}  + \epsilon \cdot  \nabla \ell (\theta^{t-1};  x^{t},y^{t}) \right) .
\end{align}

The pseudo-code of this algorithm is presented in Algorithm~\ref{algorithm:minmax-fair} in \cref{appendix-alg2}.
An intuitive explanation of Algorithm~\ref{algorithm:minmax-fair} is as follows: the attacker, from the result of the parameter update step~\eqref{eq:update-alg2}, would be able to estimate the unconstrained model (as a surrogate for the target fair model) over the set of currently selected points.  Then, a point with the largest value for a weighted sum of the loss and the fairness gap for the estimated surrogate model, potentially has a large impact on reducing the accuracy of the fair model.  We should point out that Algorithm~\ref{algorithm:minmax-fair} reduces the chance of getting stuck in local minima because, in each parameter update, it makes a step towards the negative gradient of the exact unconstrained loss.
This is in contrast with Algorithm~\ref{algorithm:OGD}, where due to the difficulty of approximating a constrained max-min problem, it might converge to some parameters not so close to the actual fair model or even not converge.

Note that, in our algorithms, if we set $\lambda = 0$, the adversarial bias attacks and their objectives are similar to the data poisoning attacks against the unconstrained models, e.g., the work of \citet{steinhardt2017certified}.  However, without taking into account the influence of the poisoning data on the fairness gap (as we do in the data point selection step), the attacker would not be as effective.  In Section~\ref{sec:eval}, we empirically investigate to what extend considering both the loss and fairness gap in designing attacks affects the accuracy of a fair model (which is trained over the union of clean data and poisoning data).

% !TEX root = ../main.tex

\section{Evaluation}\label{sec:eval}

In this section, we present the main findings of our experiments.  See details and results in Appendix~\ref{appdx:experiments}. 

\subsection{Evaluation Setup} \label{sec:eval:setup}

{\bf Datasets and models.} We train logistic regression models on the COMPAS dataset~\cite{compasdataset} and the Adult dataset~\cite{adultdataset}, which are benchmark datasets in the fairness literature.  We use race (white/black) in the COMPAS dataset, and gender in the Adult dataset, as the sensitive attribute $S$ (which is part of the feature vector $\X$).  The accuracy of classification models on these two datasets is low and close to predicting the most frequent label in the set.  This does not help understanding the behavior of models in the presence of poisoning data.  Hence, we perform a data pre-processing to separate {\em hard examples} from the data that we use for the clean training data $\D_\clean$, test data $\D_\test$, and the attack dataset $\D_\attack$.  Hard examples are data points with large loss on a trained model on the entire dataset.  We will use hard examples as one of our baselines.  We also add hard examples to the attack dataset.  %See Appendix~\ref{appdx:setup} for details.

{\bf Fair machine learning algorithms.}  We train logistic regression models with an \emph{equalized odds} fairness constraint, by using the post-processing approach (the original exact equalized odds algorithm)~\cite{hardt2016equality} and the reductions approach (the relaxed equalized odds algorithm)~\cite{agarwal2018reductions}.  See Appendix~\ref{appdx:fair_impl}.% for details. 

{\bf Adversarial bias.} Attacker adds poisoning data selected from the attack dataset $\D_\attack$ using Algorithm~\ref{algorithm:OGD} (with $\lambda =\epsilon$ on COMPAS, and $\lambda =0.1\epsilon$ on Adult), and Algorithm~\ref{algorithm:minmax-fair} (with $\lambda =100\epsilon$ for both datasets).  We use Algorithm~\ref{algorithm:minmax-fair} with $\lambda=0$ to attack unconstrained models. We use the same learning rate $\eta = 0.001$ in both algorithms.  See Appendix~\ref{sec:lambda-effect} for a discussion on choosing $\lambda$. 

{\bf Baseline algorithms.} In addition to comparing with prior data poisoning attacks against unconstrained models~\cite{steinhardt2017certified}, we also consider the following baselines.  \textit{Random sampling}: Attacker randomly selects data points from $\D_\attack$.  \textit{Label flipping}:  Attacker randomly selects data points from $\D_\attack$ and flips their labels.  \textit{Hard examples}: Attacker randomly selects data points from the set of hard examples.  

\input{plot/accuracy_COMPAS_paper.tex}

% !TEX root = ../main.tex
\begin{figure*}[t!]
  \centering
 \resizebox{\columnwidth}{!}{%
       \begin{tikzpicture}
         \begin{axis}
         [name=plot1, legend style={font=\small},title= {(a) Overall},xlabel={$\epsilon$},ylabel={}, ymin = 0.5, ymax =1, ylabel={Test Accuracy}, grid = major , xtick={0,0.05,0.1,0.15,0.2}, xticklabels={0,0.05,0.1,0.15,0.2},legend entries = {Unconstrained model, Fair~\cite{agarwal2018reductions} ($\delta = 0.1$), Fair~\cite{agarwal2018reductions} ($\delta = 0.01$), Fair~\cite{hardt2016equality}  ($\delta = 0$)}, legend pos=south west]
      %regular
           \addplot[teal,mark = asterisk,error bars/.cd, y dir=both, y explicit]  table[skip first n=1,x index=1, y index=2, col sep=comma, y error index=12] {"plot2/reduction_csv/attacker-eominmaxlossnf100-0.01.csv"};
       % 0.1
           \addplot[red,mark = square*,error bars/.cd, y dir=both, y explicit]  table[skip first n=1,x index=1, y index=3,  col sep=comma, y error index=13] {"plot2/reduction_csv/attacker-eominmaxlossnf100-0.1.csv"};
        % 0.01
           \addplot[blue, mark = diamond*,error bars/.cd, y dir=both, y explicit] table[skip first n=1,x index=1, y index=3, col sep=comma, y error index=13] {"plot2/reduction_csv/attacker-eominmaxlossnf100-0.01.csv"};
       %0
       	  \addplot[brown, mark = *,error bars/.cd, y dir=both, y explicit] table[skip first n=1,x index=1, y index=3,  col sep=comma, y error index=13] {"plot2/PP_csv/attacker-eominmaxlossnf100-0.01.csv"};
         \end{axis}
         \begin{axis}
         [name=plot2, at=(plot1.south east), anchor=south west, xshift=0.8cm, legend style={font=\small},title= {(b) Majority group},xlabel={$\epsilon$},ylabel={}, ymin = 0.5, ymax =1, xtick={0,0.05,0.1,0.15,0.2}, xticklabels={0,0.05,0.1,0.15,0.2},grid = major]

           \addplot[teal,mark = asterisk,error bars/.cd, y dir=both, y explicit] table[skip first n=1,x index=1, y index=4, col sep=comma, y error index=14] {"plot2/reduction_csv/attacker-eominmaxlossnf100-0.01.csv"};
           \addplot[red,mark = square*,error bars/.cd, y dir=both, y explicit] table[skip first n=1,x index=1, y index=6,  col sep=comma, y error index=16] {"plot2/reduction_csv/attacker-eominmaxlossnf100-0.1.csv"};
           \addplot[blue, mark = diamond*,error bars/.cd, y dir=both, y explicit] table[skip first n=1,x index=1, y index=6, col sep=comma, y error index=16] {"plot2/reduction_csv/attacker-eominmaxlossnf100-0.01.csv"};
      	  \addplot[brown, mark = *,error bars/.cd, y dir=both, y explicit] table[skip first n=1,x index=1, y index=6,  col sep=comma, y error index=16] {"plot2/PP_csv/attacker-eominmaxlossnf100-0.01.csv"};

         \end{axis}
           \begin{axis}
           [name=plot3, at=(plot2.south east), anchor=south west, xshift=0.8cm, legend style={font=\small},title= {(c) Minority group}, xlabel={$\epsilon$}, ylabel={}, ymin = 0.5, ymax =1, xtick={0,0.05,0.1,0.15,0.2}, xticklabels={0,0.05,0.1,0.15,0.2}, grid = major]

             \addplot[teal,mark = asterisk,error bars/.cd, y dir=both, y explicit] table[skip first n=1,x index=1, y index=5, col sep=comma, y error index=15] {"plot2/reduction_csv/attacker-eominmaxlossnf100-0.01.csv"};
             \addplot[red,mark = square*,error bars/.cd, y dir=both, y explicit] table[skip first n=1,x index=1, y index=7,  col sep=comma, y error index=17] {"plot2/reduction_csv/attacker-eominmaxlossnf100-0.1.csv"};
             \addplot[blue, mark = diamond*,error bars/.cd, y dir=both, y explicit] table[skip first n=1,x index=1, y index=7, col sep=comma, y error index=17] {"plot2/reduction_csv/attacker-eominmaxlossnf100-0.01.csv"};
        	  \addplot[brown, mark = *,error bars/.cd, y dir=both, y explicit] table[skip first n=1,x index=1, y index=7,  col sep=comma, y error index=17] {"plot2/PP_csv/attacker-eominmaxlossnf100-0.01.csv"};

           \end{axis}
         \end{tikzpicture}

 }
 \caption{Effect of fairness level $\delta$ on robustness across groups under adversarial sampling attack -- COMPAS dataset.  For a given $\epsilon$ the poisoning data is the same for all algorithms (generated using Alg.~\ref{algorithm:minmax-fair} with $\lambda=100\epsilon$).  The majority group (whites) contributes $61\%$ of the training data.  The curve for accuracy of the minority group under the unconstrained model overlaps with that of the model with exact fairness ($\lambda = 0$), thus is not visible.  See Figure~\ref{fig:fairness-measure} for adversarial labeling results. 
 }
 \label{fig:fairness-measure-compas-sampling}
\end{figure*}
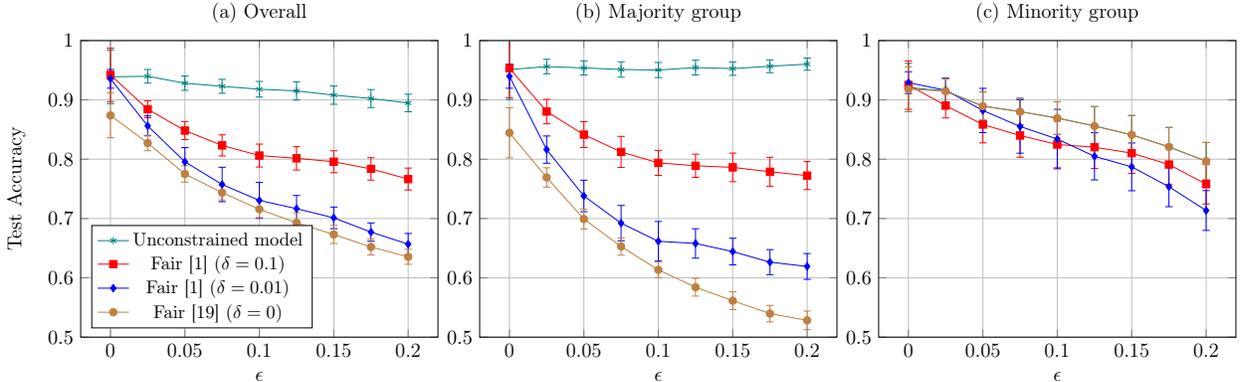

\subsection{Evaluation Results} \label{sec:eval:results}

In this section, we present the experimental results on the COMPAS dataset.  We run each experiment $100$ times, with randomizing the datasets and random seeds in the algorithms, and report the average and standard deviation values.  See Appendix~\ref{appdx:experiments} for the full results on COMPAS dataset, as well as the same evaluations on the Adult dataset (in which we observe the same patterns). 

{\bf Conflict between fairness and robustness.} Figure~\ref{fig:accuracy-compas-paper} compares the test accuracy of unconstrained models and fair models under data poisoning attacks.  We attack the unconstrained models using Algorithm~\ref{algorithm:minmax-fair} with $\lambda=0$ (no fairness constraint), which is equivalent to the optimal attack~\cite{steinhardt2017certified}.  We attack the fair models using Algorithm~\ref{algorithm:minmax-fair} with $\lambda=100 \epsilon$, and Algorithm~\ref{algorithm:OGD} with $\lambda=\epsilon$.   The baselines, e.g., adding randomly selected hard examples, do not have much effect on test accuracy.  However, under attacks with the same capability, the \emph{fair models are noticeably less robust than unconstrained models}.  At $\epsilon =0.2$, the test accuracy of fair models approaches what can be achieved even by a constant classifier.  The significant outcomes can be evidently observed in the plots on adversarial sampling, where the adversary cannot change data labels (so effectively he is adding clean data, but in an adversarially biased manner). 

In the benign setting ($\epsilon = 0$), the reduction approach~\cite{agarwal2018reductions} (relaxed equalized odds with $\delta=0.01$) has a visibly better test accuracy compared to post-processing approach~\cite{hardt2016equality} (exact equalized odds, $\delta=0$), which reflects the cost of fairness on model accuracy. Figure~\ref{fig:fairness-measure-compas-sampling}(a) shows that this cost is significantly amplified for fair models under attack, as we train models with larger levels of fairness (i.e., smaller~$\delta$).  Notably, comparing $\delta=0.1$ (weaker fairness) with $\delta=0.01$ (stronger fairness) on the reduction approach~\cite{agarwal2018reductions}, as $\epsilon$ increases, shows that robustness against adversarial bias decreases as we increase the enforced fairness level. Thus, \emph{robustness and fairness are in conflict}.

{\bf Implication of adversarial bias for majority vs. minority groups.} Figures~\ref{fig:fairness-measure-compas-sampling}(b)~and~\ref{fig:fairness-measure-compas-sampling}(c) show the test accuracy for the majority and minority groups.  We observe that the impact of adversarial bias is not homogeneous across different groups in test data.  To understand the implications of the attack, observe the relation between test accuracy of two groups on the unconstrained model, and then compare it with the same in fair models (all under the same poisoning data).  We observe that, on fair models the \emph{attack is significantly more impactful on the majority group}.  However, on the unconstrained model, the minority group is the one that incurs a larger loss.  

We also compute the fairness gap on the unconstrained model with respect to the training data (see Figure~\ref{fig:EOgap} for all results).  On training data poisoned with Algorithm~\ref{algorithm:minmax-fair} ($\lambda=100\epsilon$) at $\epsilon = 0.1$, the fairness gap is $0.54$, which is much larger than that of the random sampling baseline with $0.3$ fairness gap.  This indicates that the data poisoning attack for \emph{adversarial bias increases the fairness gap}.  This explains the underlying strategy of the attacker against fair machine learning: to increase fairness gap and distort the data distribution of mainly the minority group in order to force the fair algorithm to lower the accuracy over the whole distribution when trying to equalize its behavior across the groups. 

{\bf Distribution and importance of poisoning data.} We investigate where exactly the poisoning data are placed under attacks designed for adversarial bias.  We observe that the data samples generated by our most effective attack, Algorithm~\ref{algorithm:minmax-fair} with $\lambda = 100\epsilon$, mostly belong to the smallest subgroup, i.e., the smallest sensitive group $s$ with the least frequent label $y$ (in this case it is $+$ label with race as Black).  Thus, \emph{the attack algorithms effectively exploit the fact that fair models give a higher weight to points from the under-represented areas of the distribution to satisfy the constraints}.  See Figure~\ref{fig:subgroups} for the distribution of poisoning points in all experiments.  We also compute the accuracy of trained models on their poisoning data.  The accuracy of fair models with $\delta = 0.01$ on the poisoning data from Algorithm~\ref{algorithm:minmax-fair} is approximately $0.2$ at $\epsilon = 0.1$, which increases to almost $0.4$ at $\epsilon = 0.2$. Whereas, the accuracy of unconstrained models on the poisoning data remains more or less zero, implying that the unconstrained models ignore these points.  See Figure~\ref{fig:poisoning-acc} for the accuracy of models on their clean and poisoning data.  We observe that by increasing~$\epsilon$, the accuracy of fair models on their poisoning data $\D_\poisoning$ increases, whereas it decreases on their clean training data $\D_\clean$.  This reduces their ability to learn the underlying data distribution and generalize to the (clean) test data.  

% !TEX root = ../main.tex

{\renewcommand{\arraystretch}{1.15}
\begin{table}[t]
\caption{{\textbf{Fairness gap of attacked models on test data} -- COMPAS dataset, for $\epsilon=0.1$.  The fairness gap $\Delta$ is defined in \eqref{eq:eo-fairness}.  The numbers reflect how unfair the model is with respect to the protected group in the test data.  For fair models, compare numbers with $\delta$ (the guaranteed fairness gap on training data).  The farther apart $\Delta$ and $\delta$ are, the less the fairness generalization is on test data.}
}
\label{table:robustness-fairness-paper}
\centering
{\small
\begin{tabularx}{\columnwidth}{l >{\centering\arraybackslash}X >{\centering\arraybackslash}X >{\centering\arraybackslash}X >{\centering\arraybackslash}X >{\centering\arraybackslash}X}
\toprule
Attacks& Unconstrained Model & Fair~\cite{agarwal2018reductions} ($\delta=0.1$) & Fair~\cite{agarwal2018reductions} $(\delta=0.01)$ & Fair~\cite{hardt2016equality} ($\delta=0$)\\ \hline \hline
Benign  &0.21$\pm$0.07&0.11$\pm$0.06&0.06$\pm$0.04&0.07$\pm$0.04 \\ \cline{1-5}
Random sampling&0.19$\pm$0.07&0.08$\pm$0.03&0.11$\pm$0.05&0.13$\pm$0.07 \\
Hard examples&0.19$\pm$0.08&0.09$\pm$0.03&0.13$\pm$0.05&0.15$\pm$0.07 \\
Label flipping&0.23$\pm$0.07&0.09$\pm$0.04&0.07$\pm$0.04&0.1$\pm$0.06\\ \cline{1-5}

Adv. sampling~(Alg.~\ref{algorithm:minmax-fair}, $\lambda = 0$)~\cite{steinhardt2017certified}&0.26$\pm$0.08&0.19$\pm$0.07&0.30$\pm$0.07&0.27$\pm$0.08 \\
Adv. sampling~(Alg.~\ref{algorithm:minmax-fair}, $\lambda = 100\epsilon$)&-&0.29$\pm$0.06&0.37$\pm$0.09&0.53$\pm$0.05 \\ 
Adv. sampling~(Alg.~\ref{algorithm:OGD}, $\lambda = \epsilon$)&-&0.12$\pm$0.07&0.21$\pm$0.10&0.25$\pm$0.13 \\ \cline{1-5}

Adv. labeling~(Alg.~\ref{algorithm:minmax-fair}, $\lambda = 0$)~\cite{steinhardt2017certified}&0.28$\pm$0.08&0.13$\pm$0.05&0.19$\pm$0.08&0.25$\pm$0.08\\
Adv. labeling~(Alg.~\ref{algorithm:minmax-fair}, $\lambda = 100\epsilon$)&-&0.28$\pm$0.05&0.39$\pm$0.08&0.55$\pm$0.04\\
Adv. labeling~(Alg.~\ref{algorithm:OGD}, $\lambda = \epsilon$)&-&0.11$\pm$0.06&0.12$\pm$0.04&0.13$\pm$0.09\\
\bottomrule \\
\end{tabularx}
}
\end{table}
}

{\bf Fairness gap on test data.} The ultimate objective of fair machine learning is to extend fairness to test data.  Table~\ref{table:robustness-fairness-paper} shows how adversarial bias can jeopardize the fairness generalizability of fair models. We observe that, for $\lambda>0$, the lower the fairness level $\delta$ on training data is, the higher the fairness gap on test data becomes.  Note that e.g., for a fair model with $\delta = 0.01$ under adversarial sampling attack using Algorithm~\ref{algorithm:minmax-fair} with $\lambda =100\epsilon$, the fairness gap on the test data is about $0.37$.  This fairness gap is even larger than the fairness gap of an unconstrained model ($0.21$ in the benign setting and $0.26$ under data poisoning).  This shows that even by just controlling the sampling process for a small fraction of the training set, without affecting the labels, \emph{attacker can influence models trained with fairness constraints, to become more discriminatory than unconstrained models}.

% !TEX root = ../main.tex

\section{Related Work} \label{sec:relatedwork}

\subsection{Fairness in Machine Learning}

A classifier that is learned by minimizing the overall cumulative loss might not perform well on one sensitive group (usually the minority group), when the distribution of features per each class is different across groups. In order to address this problem, multiple definitions of fairness are proposed in the literature.  Examples include metric equality across sensitive groups~\cite{hardt2016equality, calders2009building}, individual fairness~\cite{dwork2012fairness}, causality~\cite{kusner2017counterfactual}, and many techniques to satisfy group-based fairness (which is the focus of this paper) such as pre-processing methods~\cite{zemel2013learning, madras2018learning}, in-processing methods~\cite{kamishima2011fairness, zafar2015fairness, zafar2017fairness, agarwal2018reductions}, and post-processing methods~\cite{hardt2016equality}. Pre-processing methods aim at finding a new representation of data such that it retains as much information of input features as possible, except those which can lead to bias. In-processing methods enforce fairness during the training process, for example, by incorporating the fairness constraints into the objective function as a regularization term. Post-processing methods correct the predictions of a given trained model, without modifying the training data or the training process. Please refer to~\cite{mehrabi2019survey} for a recent survey on methods to achieve fairness. In this work, we focus on the notion of Equalized odds~\cite{hardt2016equality} and use the reductions approach~\cite{agarwal2018reductions} (in-processing) and post-processing approach~\cite{hardt2016equality} to train fair models.

Imposing fairness constraints might come at a cost of the model's performance. The effect of fair classification on accuracy and the compatibility of various definitions with each other have been studied in some related works~\cite{corbett2017algorithmic, keinberg2017inherent}. \citet{corbett2017algorithmic} show that the optimal decision rule is different from the fair decision rules that satisfy fairness definitions (statistical parity, conditional statistical parity, predictive equality).  Thus, imposing fairness constraints has a cost on the model accuracy. \citet{corbett2017algorithmic} then evaluate the cost of fairness empirically. \citet{keinberg2017inherent} show that it is {\em impossible} to achieve equal calibration, false positive rate and false negative rate, if the fraction of positive labeled examples is different across sensitive groups.

\subsection{Data Poisoning Attacks}

Machine learning systems are susceptible to data poisoning attacks. In indiscriminate attacks, which is the focus of this paper, the adversary's objective is to degrade the test accuracy of the model~\cite{biggio2012poisoning, mei2015security, jagielski2018manipulating, li2016data, mei2015using,koh2018stronger,koh2017understanding}.  In targeted attacks, the adversary seeks to impose the loss on specific test data points or small sub-populations \cite{gu2017badnets, chen2017targeted, burkard2017analysis, shafahi2018poison, koh2017understanding, suciu2018does}.

\citet{steinhardt2017certified} propose an optimal algorithm for poisoning attacks on (unconstrained) convex models, given a set of feasible poisoning data points. The algorithm relies on the assumption that test loss of the target model can be approximated as training loss of the model on clean data (assuming $\D_\test$ is drawn from the same distribution as the clean training data $\D_\clean$).  Our attack algorithm is inspired by this work and uses the same online learning framework. Note that, when $\lambda = 0$ in Algorithm~\ref{algorithm:minmax-fair}, it is equivalent to the algorithm in~\cite{steinhardt2017certified}.

In our setting of adversarial sampling bias, the attacker is not allowed to modify the label $y$. In {\em clean-label} data poisoning attacks~\cite{shafahi2018poison}, the attacker manages to reduce the accuracy of target examples via injecting the correctly labeled data with {\em modified features}. Compared with this work, the attacker in adversarial sampling bias is not permitted to change the features. Furthermore, the objective of our attack is to reduce the accuracy of the model over the entire test data.

When the attacker is allowed to change both features and labels of the poisoning data, a typical poisoning attack algorithm is gradient ascent, in which the attacker iteratively modifies each attack point in the poisoning dataset by following the gradient of the test loss with respect to poisoning training data. This kind of attack is first studied in the context of SVMs by~\citet{biggio2012poisoning}, and has subsequently been extended to linear and logistic regression~\cite{mei2015using}, topic modeling~\cite{mei2015security}, collaborative filtering~\cite{li2016data}, and neural networks~\cite{koh2017understanding}.  In our setting, we assume the attacker is not allowed to modify the features, as we focus on the most practical scenario in decision making processes that move toward automation. An interesting future direction would be to allow changes of features and design poisoning attacks using the gradient-based algorithm. Given more power to the attacker, it is likely that the attacker could reduce the test accuracy more significantly.

\subsection{Learning Fair Models from Noisy Training Data}

In most practical scenarios, the training data used for learning models might be biased (under-representation bias) and/or noisy (with mis-labeling). The mis-labeling phenomenon can be random or adversarial. Mislabeling can be seen as a specific case of adversarial labeling bias, where the attacker flips labels of data points only from a certain part of the population. Similarly, under-representation bias can be considered as a specific case of adversarial sampling bias. Multiple works in the literature study the impact of noisy and biased data on machine learning.

\citet{calders2013unbiased} show that learning a regular unconstrained model from training data with under-representation and mislabeling bias results in biased predictions on test data. \citet{kallus2018residual} consider the case of systematic censoring in training data. For example, a model for predicting whether an individual defaults a loan can be trained only on individuals who were already granted a loan. Individuals who were not even granted loan cannot be present in the dataset. Such systematic censoring can be seen as a form of sampling bias. This work shows that even after using a fair classifier, that seeks to achieve fairness by equalizing accuracy metrics across sensitive groups, the classifier can still be unfair on the population due to the systematic censoring in training data. We also show a similar result in Table~\ref{table:robustness-fairness} that learning models on training data with adversarial bias increases their fairness gap on the test data.

Under varying assumptions, multiple works~\cite{blum2019recovering, jiang2019identifying, de2018learning, lamy2019noise} have proposed strategies to account for under-representation bias and mislabeling while learning models. \citet{de2018learning} study selective label bias, where true outcomes corresponding to a certain label cannot be learned (for example in predicting recidivism risk), as such examples cannot be added to the training data. Selective label bias can be considered as a form of sampling bias. The authors propose a method for augmenting the dataset with human expert predictions to mitigate selective label bias. Assuming that examples in certain sensitive groups are randomly mislabeled, \citet{jiang2019identifying} propose a re-weighting strategy for recovering the optimal classifier on unbiased data from training data with labeling bias.
When uniform random noise is present in the sensitive attribute, it is shown that demographic parity gap of a fair classifier on test data increases~\cite{lamy2019noise}. The authors quantify the increase in DP gap on test data at any level of noise in the label. Given the level of noise in sensitive attribute, this is used to compute the exact level of DP gap that needs to be imposed on training data, for achieving a target DP gap on the test data. \citet{blum2019recovering} consider a training set corrupted by under-representation/labeling bias (or both). Assuming access to an infinite number of samples and learning different classifiers for different sensitive groups, this work shows that ERM with Equal Opportunity constraint on the biased data can recover the Bayes-optimal classifier for the true data distribution.

All the above works~\cite{blum2019recovering, jiang2019identifying, de2018learning, lamy2019noise} assume that the noise and bias in training data is an uniform distribution of under-representation/mislabeling over a subspace of points and study the consequences of learning from training data with such bias. These results cannot translate to our case of adversarial bias as we consider non-uniform bias over the input space, and our attacker introduces bias with the specific intention of reducing test accuracy.
% !TEX root = ../main.tex
\section{Broader Impact}\label{sec:impact}

AI governance frameworks released by multiple organizations such as the European union\footnote{EU guidelines on ethics in artificial intelligence: Context and implementation \url{https://www.europarl.europa.eu/RegData/etudes/BRIE/2019/640163/EPRS_BRI(2019)640163_EN.pdf}} and Google AI\footnote{Perspectives on Issues in AI Governance \url{https://ai.google/static/documents/perspectives-on-issues-in-ai-governance.pdf}} state that fairness and robustness are two key requirements for building trustworthy automated systems. In fact, the AI governance document by Google AI mentions a library for training with equalized odds constraints as a tool for {\bf fairness} appraisal (page 14) and {\bf data poisoning} as a possible risk for AI safety (page 17).

In this work, we show that imposing group-fairness constraints on learning algorithms decreases their robustness to poisoning attacks.  This is a significant obstacle towards implementing trustworthy machine learning systems.  We specifically provide evidence that an attacker that can only control the sampling and labeling process for a fraction of the training data can significantly degrade the test accuracy of the models learned with fairness constraints. In fact, from a practical perspective, the attack algorithms for adversarial bias, introduced in this paper, can easily and stealthily be perpetrated in many existing systems, as similar to historical discrimination and/or selection bias.  This calls for an immediate attention to a theoretical study of the robustness properties for any fair machine learning algorithm and the potential consequences of using such algorithms in presence of adversarially biased data.  Moreover, this calls for designing models which are not only fair, but also robust.  We suspect that there might be a fundamental trade-off between these two aspects of trustworthy machine learning. We also show that learning with fairness constraints in presence of adversarial bias results in a classifier that does not only have a poor test accuracy but is also potentially more discriminatory on test data.  Hence, machine learning system designers must be cautious when deploying FairML in real world applications, as they might be building a system that is both unfair and less robust.

% !TEX root = ../main.tex

\section{Conclusions}

We have introduced adversarial bias as a framework for data poisoning attacks against fair machine learning. Our attack exploits the existing tension between fairness constraint and model accuracy, and the fact that the fair models try to achieve equality on groups with different sensitive attributes even though they do not have the same weight in the loss function of the model. Thus, our experiments show that by adding a small percentage of adversarially sampled/labeled data points to the training set, the attacker can significantly reduce the model accuracy beyond what he can achieve in unconstrained models.  Adversarial bias also increases the fairness gap on test data. 

% !TEX root = ../main.tex

\section*{Acknowledgments}

This  work  is supported  by  the  NUS  Early  Career Research Award (NUS ECRA) by the Office of the Deputy President, Research \& Technology (ODPRT), award number NUS-ECRA-FY19-P16.

%\bibliographystyle{plainnat}
%\bibliography{references}

\newpage
\appendix
% !TEX root = ../main.tex

\section{Table of Notations}

\begin{table}[h!]
	\centering
	\caption{List of Notations}
	  \begin{adjustbox}{max width=\columnwidth}
		\begin{tabular}{lll}
			\hline
			\textbf{Symbol} & \textbf{Description} & \textbf{Where it is defined}\\ \hline
			 $\mathcal{X}$ & Features space & Section~\ref{backgrd:problem_setting} \\
			 $\mathcal{Y}$ & Label space & Section~\ref{backgrd:problem_setting} \\
			$X$ & Random variable associated with features & Section~\ref{backgrd:problem_setting} \\
			 $Y$ & Random variable associated with lables & Section~\ref{backgrd:problem_setting} \\
			 $(X,Y)$ & Underlying distribution of the data & Section~\ref{backgrd:problem_setting} \\
			 $\D$ & Training dataset & Section~\ref{backgrd:problem_setting} \\
			$\D_\clean$ & Clean training dataset & Section~\ref{backgrd:problem_setting} \\
			$n$ & Size of the clean training dataset & Section~\ref{backgrd:problem_setting} \\
			$\D_\poisoning$ & Poisoning training dataset & Section~\ref{backgrd:problem_setting} \\
%			$\epsilon n$ & Size of poisoning dataset & \ref{backgrd:problem_setting} \\
			$\epsilon$ & The ratio of the size of poisoning data over the size of clean data in the training set  & Section~\ref{backgrd:problem_setting} \\
			$\Pr_\D$ & Computing a probability empirically over a dataset $\D$ & Section~\ref{backgrd:problem_setting} \\
			%not sure how to write it properly.
			$\D_\attack$ & Attack dataset   & Section~\ref{backgrd:problem_setting} \\
			$\D_\test$ & Test dataset & Section~\ref{backgrd:problem_setting} \\
			$S$ & Sensitive/protected attribute & Section~\ref{backgrd:problem_setting} \\
			$\delta$ & Guaranteed fairness level on training data& Section~\ref{backgrd:problem_setting} \\
			$\Delta$ & Fairness gap & Section~\ref{backgrd:problem_setting} \\
			$C(\theta,\D)$ & Fairness constraint of $f_{\theta}$ on dataset $\D$ &  Section~\ref{backgrd:fairness_definition}\\
			$\theta$ & Model parameters & Section~\ref{backgrd:problem_setting} \\
			$\Theta$  & Parameter space & Section~\ref{backgrd:problem_setting} \\
			$f_{\theta}$ & Classification model parameterized by $\theta$  & Section~\ref{backgrd:problem_setting} \\
			$\ell(\theta;x,y)$ &  Loss of $f_\theta$ on data $(x,y)$ & Section~\ref{backgrd:problem_setting} \\
			$\L(\theta;\D)$ & Cumulative loss of $f_\theta$ on dataset $\D$ & Section~\ref{backgrd:problem_setting} \\
			$\hat{\theta}$ & Optimal model parameters trained on $\D$ with fairness constrained & \cref{eq:poisoning_attack_fair} \\
			$\lambda$ & Lagrange multiplier (penalty parameter)& Section~\ref{sec:framework} \\
			$\D \cup \{(x,y)\}^k$ & Adding $k$ copies of the data point $(x,y)$ to set $\D$ & Section~\ref{sec:certified} \\
			$M^*$ & Maximum loss under the optimal attack & \cref{eq:surrogate-optimization} \\
			%I think here should be Minimum loss, but keep it consistent with the paper, I use the maximum here.
			 $M(\D_\poisoning)$ & The imposed loss by poisoning dataset $\D_\poisoning$ on the fair model & \cref{eq:surrogate-optimization}\\
			 $(x^t, y^t)$ & The data point selected by Algorithm~\ref{algorithm:OGD} at step $t$ &  Section~\ref{sec:certified} \\
			 $\theta^t$ & The model parameter chosen by Algorithm~\ref{algorithm:OGD} at step $t$ &  Section~\ref{sec:certified} \\
			 $\F(\D_\attack)$ & Feasible set of poisoning data points & Section~\ref{sec:certified} \\
			$\eta$ & Learning rate & Algorithm~\ref{algorithm:OGD} \\
			$\D_\poisoning^*$ & Poisoning dataset produced by Algorithm~\ref{algorithm:OGD}  & Section~\ref{sec:certified}\\
			 $\text{Regret(T)}$ & Regret of Algorithm~\ref{algorithm:OGD} after $T$ steps & Section~\ref{sec:certified} \\
			 $U(\theta)$ & Loss of model $f_\theta$ under the optimal attack  & Appendix~\ref{appdx:proof} \\
			 $U^*$ & Minimum loss under the optimal attack $\min_{t=1}^T U(\theta^t)$ & Appendix~\ref{appdx:proof} \\
			 $g_t(\theta)$ & The loss function for online learning algorithm at step $t$ &  Appendix~\ref{appdx:proof} \\
			 $\tilde{\theta}$ & Optimal model parameters for minimizing the cumulative loss $\sum_{t=1}^T g_t(\theta)$ & Appendix~\ref{appdx:proof} \\
			  $\eta_t$ & Learning rate at time $t$ in Algorithm~\ref{algorithm:OGD} used in the proof of Corollary~\ref{corollary:optimality} & Appendix~\ref{appdx:proof} \\
			  $d$ & Upper bound on the diameter of $\Theta$ & Appendix~\ref{appdx:proof} \\
			  $G$ & Upper bound on the norm of the subgradients of $g_t$ over $\Theta$ & Appendix~\ref{appdx:proof} \\
			  $\tilde{\Delta}(\theta,\D)$ & The convex relaxation for the fairness gap of equalized odds  & Appendix~\ref{sec:regret-approximation} \\
			  $\ell_{l}(\theta;x,y)$ & linear loss of model $f_\theta$ on data point $(x,y)$ & Appendix~\ref{sec:regret-approximation} \\
			  $\D^{y,s}$ &  A set of data points from group $s$ with label $y$ in $\D$  & Appendix~\ref{sec:regret-approximation} \\
			  $n^{y,s}$ &  The number of data points in $\D^{y,s}$ & Appendix~\ref{sec:regret-approximation} \\
			  $R^{y,s}(\theta,\D)$ & Average linear loss of $f_\theta$ for data points in $\D^{y,s}$ & Appendix~\ref{sec:regret-approximation} \\
			\hline
		\end{tabular}
	\end{adjustbox}
	\label{table:notations}
\end{table}

\newpage
% !TEX root = ../main.tex

\section{Supplementary Theoretical Results}

\subsection{Proof for Theorem~\ref{theorem:bound-regret}} \label{appdx:proof}

\begin{proof}
We should point out that in this proof we follow the approach of \cite{steinhardt2017certified}.
	Assume $T = \epsilon n$ is the time horizon.
	We have $\D^*_\poisoning = \{  (x^1,y^1) , \cdots\ (x^T, y^T) \}$ as the data poisoning set produced by Algorithm~\ref{algorithm:OGD}. Also, $\theta^t $ is the parameter chosen by the algorithm at the $t$-th step.
	First, from max–min inequality we have:
	\begin{align*}
		M^* \eqtext{def} \max_{\D_\poisoning} \min_{\theta} & \left[  \frac{1}{n} \mathcal{L} (\theta; \mathcal{D}_\clean \cup \mathcal{D}_\poisoning) + \frac{\lambda}{\epsilon n} \cdot \sum_{(x,y) \in \mathcal{D}_\poisoning}   \Delta \left(\theta, \mathcal{D}_\clean  \cup  \xyen \right) \right]  \\
		&	 \leq
		\min_{\theta} \max_{\D_\poisoning}  \left[  \frac{1}{n} \mathcal{L} (\theta; \mathcal{D}_\clean \cup \mathcal{D}_\poisoning) +
	\frac{\lambda}{\epsilon n} \cdot \sum_{(x,y) \in \mathcal{D}_\poisoning}   \Delta \left(\theta, \mathcal{D}_\clean  \cup  \xyen \right) \right] \enspace .
	\end{align*}

	Furthermore, for a given $\theta$ we define:
	\begin{align*}
		U(\theta) \eqtext{def}  \max_{\D_\poisoning}  & \left[  \frac{1}{n} \mathcal{L} (\theta; \mathcal{D}_\clean \cup \mathcal{D}_\poisoning) +
	\frac{\lambda}{\epsilon n}  \cdot \sum_{(x,y) \in \mathcal{D}_\poisoning}   \Delta \left(\theta, \mathcal{D}_\clean  \cup \xyen \right) \right]  \\
		& =  \frac{1}{n} \mathcal{L } (\theta; \mathcal{D}_\clean) +   \max_{(x,y) \in \F(\D_\attack)} \left[ \epsilon \cdot  \ell (\theta; x,y) + \lambda \cdot  \Delta \left(\theta, \mathcal{D}_\clean  \cup \xyen \right)  \right] \enspace .
	\end{align*}
	We define $U^{*} = \min_{t=1}^{T} U(\theta^t)$.
	Note that for any given $\theta$, we have $M^* \leq U(\theta)$. More specifically, we have $M^* \leq U^{*}$.

	From the definition of $M^*$, for any set, including $\D^*_\poisoning$, we have
	\begin{align*}
		\min_{\theta} \left[  \frac{1}{n} \mathcal{L } (\theta; \mathcal{D}_\clean \cup \D^*_\poisoning) +
		\frac{\lambda}{\epsilon n}  \cdot \sum_{(x,y) \in \D^*_\poisoning}
		\Delta \left(\theta, \mathcal{D}_\clean  \cup \xyen \right) \right]  = M(\D^*_\poisoning) \leq M^*
	\end{align*}

	Let us define $T$ different functions
	\begin{align} \label{eq:function-f}
	g_t(\theta) = \frac{1}{n} \mathcal{L} (\theta; \mathcal{D}_\clean ) +   \epsilon \cdot \ell (\theta; x^{t+1},y^{t+1})  +\lambda \cdot
	\Delta (\theta, \mathcal{D}_\clean  \cup \xyent{t+1} ) \enspace,
	\end{align}
	for $0 \leq t \leq T$. Let us define
	\[ \tilde{\theta} = \argmin_{\theta \in \Theta} \sum_{t=1}^{T} g_t(\theta) \enspace . \]

	Note that we have
	\[\frac{\sum_{t=1}^{T} g_t(\tilde{\theta})}{T} = M(\D^*_\poisoning) \leq
	M^*
	\leq U^* \leq  \frac{\sum_{t=1}^{T} g_t(\theta^t)}{T}
	\]
	Finally, from the definition of regret we have:
	\[ \frac{\sum_{t=1}^{T} g_t(\theta^t)}{T}   -  \frac{\sum_{t=1}^{T} g_t(\tilde{\theta})}{T}  = \frac{\mathrm{Regret}(T)}{T} \enspace, \]
	which consequently completes the proof of the theorem.
\end{proof}

\subsection{The Conditions for the Optimality of the Attack} \label{sec:regret-approximation}

In this section, we prove that under what conditions, our algorithm finds the (nearly) optimal solution for the attack.
We first state a direct consequence of \cref{theorem:bound-regret} for a no-regret algorithm which results from a convexity assumption for functions $g_t(\theta)$. We then explain under what conditions this convexity assumption is valid.

\begin{corollary}\label{corollary:optimality}
	Under the assumption that (i) loss function $\ell$ is convex in $\theta$, (ii) $\Delta(\theta, \D)$ is convex in~$\theta$ and (iii) $\eta_t = \frac{d}{G\sqrt{t}}$ for $1 \leq t \leq T$,
	Algorithm~\ref{algorithm:OGD} produces the near optimal poisoning dataset $D^*_\poisoning$, such that
	\begin{equation}
	M^* - M(D^*_\poisoning) \leq 	\frac{3Gd}{\sqrt{\epsilon n}}
	\end{equation}
	where $\eta_t$ is step size at time $t$, $d$ is an upper bond on the diameter of $\Theta$, and $G$ is an upper bound on the norm of the subgradients of $g_t$ over $\Theta$, i.e., $\norm{\nabla g_t(\theta)} \leq G$.
\end{corollary}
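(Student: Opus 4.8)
The plan is to combine Theorem~\ref{theorem:bound-regret} with a standard regret bound for online gradient descent, which becomes available once the convexity assumptions (i) and (ii) guarantee that each $g_t$ defined in \eqref{eq:function-f} is convex in $\theta$. The key observation is that Algorithm~\ref{algorithm:OGD}'s parameter update in line~\ref{line:OGD-update} is precisely an online gradient descent step on the sequence of loss functions $g_t$: the gradient $\nabla g_t(\theta^{t-1})$ matches the update direction, since $g_t$ is the sum of the (fixed) clean loss term $\nicefrac{1}{n}\L(\theta;\D_\clean)$ and the per-step penalized loss on the selected point $(x^{t+1}, y^{t+1})$. Thus the theorem reduces the problem of bounding $M^* - M(\D^*_\poisoning)$ to bounding $\mathrm{Regret}(\epsilon n)/(\epsilon n)$.

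First I would verify that assumptions (i) and (ii) make each $g_t$ convex in $\theta$: the clean cumulative loss $\L(\theta;\D_\clean)$ is a sum of convex losses hence convex, the single-point loss $\epsilon\cdot\ell(\theta; x^{t+1}, y^{t+1})$ is convex by (i), and the fairness-gap term $\lambda\cdot\Delta(\theta, \D_\clean \cup \xyent{t+1})$ is convex by (ii); a nonnegative combination of convex functions is convex. Next I would invoke the standard online gradient descent regret guarantee (e.g., from \cite{hazan2016introduction}): with step sizes $\eta_t = \nicefrac{d}{G\sqrt{t}}$, where $d$ bounds the diameter of $\Theta$ and $G$ bounds the subgradient norms $\norm{\nabla g_t(\theta)} \le G$, the regret after $T$ steps satisfies $\mathrm{Regret}(T) \le \tfrac{3}{2}Gd\sqrt{T}$. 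Setting $T = \epsilon n$ and dividing by $T$ gives $\mathrm{Regret}(\epsilon n)/(\epsilon n) \le \tfrac{3}{2}Gd/\sqrt{\epsilon n}$, and chaining with Theorem~\ref{theorem:bound-regret} yields the claimed bound (with the stated constant $3$ absorbing the precise constant from the chosen regret bound).

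The main obstacle I anticipate is not the chaining itself but justifying that the hypotheses of the off-the-shelf regret theorem are genuinely met and that the algorithm's update is a bona fide OGD step. In particular, I would want to confirm that the objective in the data-point selection step (line~\ref{line:OGD:pick}) and the function $g_t$ being minimized are consistent, so that the regret of the sequence $\{g_t\}$ is what controls the gap; the fact that the selected point $(x^{t+1}, y^{t+1})$ enters $g_t$ means the comparator $\tilde\theta = \argmin_\theta \sum_t g_t(\theta)$ is well-defined and the regret is measured against the right fixed point. I would also need the subgradient bound $G$ to hold uniformly over $\Theta$ for every $g_t$, which requires that the fairness gap $\Delta$ has bounded subgradients; this is where assumption (ii), together with the compactness implicit in the diameter bound $d$, does the real work. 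Once these are in place, the remaining steps are routine substitution.
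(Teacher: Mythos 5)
Your proposal follows essentially the same route as the paper's own proof: observe that Algorithm~\ref{algorithm:OGD}'s update is exactly online gradient descent on the functions $g_t$, use assumptions (i) and (ii) to conclude each $g_t$ is convex, invoke the standard OGD regret bound of \cite{hazan2016introduction} with step sizes $\eta_t = \nicefrac{d}{G\sqrt{t}}$, and chain the resulting $O(Gd\sqrt{T})$ regret with Theorem~\ref{theorem:bound-regret} at $T=\epsilon n$. Your extra care in verifying convexity term-by-term and in noting that the sharper constant $\tfrac{3}{2}$ from Hazan is absorbed into the stated constant $3$ is sound and consistent with the paper's argument.
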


\begin{proof}
First note that Algorithm~\ref{algorithm:OGD} exactly runs as online gradient descent algorithm for $g_t(\theta)$ functions. 
From the assumptions (i) and (ii), we conclude that functions $g_t(\theta)$ are convex.
The theoretical guarantee of the online gradient descent algorithm for convex functions  \cite{hazan2016introduction} allows us to bound the average regret
\[  \frac{\mathrm{Regret}(T)}{T}  \leq  \frac{3Gd}{\sqrt T} \enspace, \]
where $d$ is an upper bond on the diameter of $\Theta$, and $G$ is an upper bound on the norm of the subgradients of $g_t$ over $\Theta$, i.e., $\norm{\nabla g_t(\theta)} \leq G$.
Finally, the proof is concluded from this bound for the regret and the result of \cref{theorem:bound-regret}.
\end{proof}

Next, we discuss the optimality conditions for linear classifiers with a convex loss, e.g., $\ell(\theta ; x, y)=\max (0,1-y\langle\theta, x\rangle)$ for SVM.
In our attack, we use \emph{equalized odds} as our fairness constraint which is non-convex.
We adopt simplification proposed by \citet{donini2018empirical} to reach convex relaxations of loss and fairness constraint.
Instead of balancing prediction error, \citet{donini2018empirical} propose a fairness definition as balancing the risk among two sensitive groups.
Following the same idea, we define the linear loss as $\ell_{l}$ (e.g., $\ell_{l} = (1 - f_\theta(x))/2$ for SVM). Based on the linear loss, the convex relaxation for the fairness gap of equalized odds is defined as follows:
\begin{align}\label{eq:loss_diff}
	\tilde{\Delta}(\theta, \D)  \coloneqq \frac{|R^{+,a}(\theta,\D) - R^{+,b}(\theta, \D)| + |R^{-,a}(\theta, \D) - R^{-,b}(\theta, \D)|}{2} \enspace ,
\end{align}
for $R^{y,s}(\theta, \D) = \frac{1}{n^{y,s}}\sum_{(x,y) \in \D^{y,s}}\ell_{l}(\theta;x,y)$ where $\D^{y,s}$  is the set of data points from group $s$ with label $y$ in $\D$ and $n_{y,s} = |\D^{y,s}|$.  
To find the optimal attack for the EO fair model, in \cref{eq:surrogate-optimization}, we replace loss $\ell$ with a convex loss (e.g. Hinge loss) $\ell_c$ and replace $\Delta(f_\theta;\D)$ with the convex relaxation $\tilde{\Delta}(\theta;\D)$. Hence, Algorithm~\ref{algorithm:OGD} produces the nearly optimal poisoning set $\D^*_\poisoning$ such that it has the maximal damage on the fair model under our approximations.

As a future research direction, one could try to design new online algorithms that achieve small regrets in the non-convex setting or under better approximations of the fairness constraint. 
Our framework can then utilize such online algorithms to further investigate the effect of data poisoning attacks on the robustness of models with fairness constraints.

\subsection{Pseudocode of the Algorithm from \cref{sec:regular-attack}} \label{appendix-alg2}

For the sake of completeness, we present the full pseudo-code for our data poisoning algorithm proposed in \cref{sec:regular-attack}.

\begin{algorithm}[h]
	\caption{}
	\begin{algorithmic}[1]
		\State {\bfseries Input:} Clean data $\D_\clean$, $n = |\D_\clean|$, feasible poisoning set $\F(\D_\attack)$, number of poisoning data $\epsilon n$, penalty parameter (Lagrange multiplier) $\lambda$, learning rate $\eta$.
\State {\bfseries Output:} Poisoning dataset $\D_\poisoning$.
		\State Initialize $\theta^0$
		\For{$ t = 1,\cdots,\epsilon n$}
		\State  $ (x^t, y^t) \gets \argmax_{(x,y) \in \F(\D_\attack)} 
		\left[ \epsilon \cdot \ell (\theta^{t-1}; x,y) + \lambda \cdot  \Delta\left(\theta^{t-1}, \D_\clean  \cup \xyen \right)  \right] $ \label{line:SA:pick}
		\State $\D_\poisoning \leftarrow \D_\poisoning \cup \{(x^t, y^t)\}$
		\State $\theta^t \leftarrow \theta^{t-1} - 
		\eta \left( \frac{\nabla \L (\theta^{t-1}; \D_\clean )}{n}   + 
		\epsilon \cdot  \nabla \ell(\theta^{t-1}; x^{t},y^{t})  
		\right)$ \label{line:SA-update}
		\EndFor
	\end{algorithmic}\label{algorithm:minmax-fair}
\end{algorithm}
% !TEX root = ../main.tex

\section{Fair Machine Learning Algorithms} \label{appdx:fair_impl}

The {\bf post-processing approach} is the first proposed algorithm to achieve equalized odds~\cite{hardt2016equality}. The fair model is obtained by adjusting a trained unconstrained model so as to remove the discrimination according to equalized odds. The outcome of this approach is a randomized classifier that assigns to each data point a probability of changing the prediction output by the unconstrained model, conditional on its protected attribute, and predicted label. These probabilities are computed by a linear program that optimizes the expected loss.

Many methods have been proposed to achieve fairness in machine learning (see~\cite{mehrabi2019survey} for a recent survey).  The {\bf reductions approach} proposed by ~\cite{agarwal2018reductions}  trains a fair randomized classifier over a hypothesis class by reducing the constrained optimization problem to learning a sequence of cost-sensitive classification models. Cost-sensitive classification is used in this as an oracle to solve classification problems resulted from a two-player game: one player (primal variables) minimizes the loss function; the other player (dual variables) maximizes the fairness violation (constraints).

% !TEX root = ../main.tex

\section{Supplementary Experimental Results}\label{appdx:experiments}

For the following section, we present the detailed experimental results on COMPAS and Adult dataset.
All the results on COMPAS dataset are averaged over 100 runs with different random seeds. On Adult dataset, all the results are averaged over 50 runs with different random seeds.

\subsection{Details of datasets and models}\label{appdx:setup}

We use two datasets in our evaluation, their details are described below.

\begin{table}[t!]
\caption{Distribution of data points in clean training dataset $\D_\clean$ and attack dataset $\D_\attack$ - COMPAS dataset.}
\label{table:distributtion-compas}
\centering
\begin{tabular}{ccccccc}
\cline{1-3} \cline{5-7}
\multicolumn{3}{c}{$\D_\clean$} &  &\multicolumn{3}{c}{ $\D_\attack$}\\ \cline{1-3} \cline{5-7}
&$y=-$& $y=+$& &&$y=-$&$y=+$  \\ \cline{1-3} \cline{5-7}
$s=0$&28.5\% &31.8\% & &$s=0$& 29.0\% & 31.1\% \\ \cline{1-3} \cline{5-7}
$s=1$&32.5\% &7.2\% & &$s=1$& 16.0\% & 23.9\% \\ \cline{1-3} \cline{5-7}
\end{tabular}
\end{table}

\begin{table}[t!]
\caption{Distribution of data points in clean training dataset $\D_\clean$ and attack dataset $\D_\attack$ - Adult dataset.}
\label{table:distributtion-adult}
\centering
\begin{tabular}{ccccccc}
\cline{1-3} \cline{5-7}
\multicolumn{3}{c}{$\D_\clean$} &  &\multicolumn{3}{c}{$\D_\attack$}\\ \cline{1-3} \cline{5-7}
&$y=-$& $y=+$& &&$y=-$&$y=+$  \\ \cline{1-3} \cline{5-7}
$s=0$& 48.5\% &16.5\% & &$s=0$& 45.0\% & 23.4\% \\ \cline{1-3} \cline{5-7}
$s=1$& 32.3\% &2.6\% & &$s=1$& 27.2\% & 4.4\% \\ \cline{1-3} \cline{5-7}
\end{tabular}
\end{table}

\paragraph{COMPAS. }
COMPAS \cite{compasdataset} dataset  contains 5278 data samples. The classification task is to predict recidivism risk from criminal history and demographics.
We consider race as the sensitive attribute and include records only with white/black as race. There are 3175 records (60.2\%) for the sensitive attribute as white. Among the white group, 52.3\% have positive labels while among the black group, this number is 41.9\%. Overall, there are 2483 records (47\%) are labeled positive.

\textbf{\textit{Pre-processing}}
A model trained with the original dataset can only achieve low accuracy (66.6\% for the Logistic regression model, compared to constant prediction classifier that can achieve 53\% accuracy), which does not help the understanding of the model's behavior in the presence of data poisoning attacks. To get rid of the noise that exists in the dataset, we pre-process the dataset as follows: we train an SVM model with RBF kernel on the entire dataset and only keep 60\% of the data points which have the smallest loss. To create the training data $\D_\clean$, test data $\D_{\test}$ and attack dataset $\D_\attack$, we randomly split the clean data in the corresponding ratio 4:1:1. Hard examples (the left out data points) are added to the attack dataset.

\textbf{\textit{Data distribution}}
The data distribution of points in clean training dataset $\D_\clean$ and attack dataset $\D_\attack$ after pre-processing are presented in
Table~\ref{table:distributtion-compas}. The numbers are the average values over all the datasets we evaluated on. On average, $\D_\clean$ contains 2111 samples, $\D_\test$ 528 samples. $\D_\attack$ consists of 2639 samples out of which 2112 are hard examples.. A Logistic regression model trained on  $\D_\clean$ achieves on average 94\% accuracy on test data.

\textbf{\textit{Model}}
We use Logistic regression for classification.

\paragraph{UCI Adult (Census Income). }
Adult dataset \cite{adultdataset} includes 48,842 records with 14 attributes such as age, gender, education, marital status, occupation, working hours, and native country. The (binary) classification task is to predict if a person makes over \$50K a year based on the census attributes. We consider gender (male and female) as the sensitive attribute. In this dataset, 66.8\% are males, and 23.9\% are labeled one, i.e having an income over \$50K  a year. Among male samples, 30.4\% are positive samples; for the females, this number is 10.9\%.

\textbf{\textit{Pre-processing}}
A model trained on this dataset generally achieves below 90\% accuracy (Logistic regression: 85.3\%, 2-layer fully connected neural network with 32 hidden units each layer: 85.3\% on training data, compared to a constant prediction classifier that can achieve 76.1\% accuracy). To enhance the model accuracy, we apply similar pre-processing steps as on COMPAS:  we train an SVM model with Linear kernel on the entire dataset and keep 90\% of the data points which have the smallest loss. The number of females with income above \$50K is small; hence we randomly split 1/2 of the data for $\D_\attack$. Of the remaining data, 70\% are use for $\D_\clean$ and 30\% for $\D_{\test}$. Hard examples (the left out data points) are added to the attack dataset.

\textbf{\textit{Data distribution}}
The data distribution of the points in clean training dataset $\D_\clean$ and attack dataset $\D_\attack$ after pre-processing are presented in Table~\ref{table:distributtion-adult}. The numbers are the average values over all the datasets we evaluated on. On average, $\D_\clean$ contains 15385 samples, $\D_\test$ 6594 samples. $\D_\attack$ consists of 26863 samples.  $\D_\clean$ maintains approximately the same fractions of males and females as in the original dataset. A Logistic regression model trained on  $\D_\clean$ achieves on average 94\% accuracy on test data.

\textbf{\textit{Model}}
We use Logistic regression for classification.

\subsection{Implementation and Parameters Selection} \label{sec:lambda-effect}

To generate poisoning data points, we implement Algorithm~\ref{algorithm:OGD} and Algorithm~\ref{algorithm:minmax-fair}.

As discussed in \ref{sec:regret-approximation}, in Algorithm~\ref{algorithm:OGD}, the attacker uses an SVM model (due to the linear approximation of the fairness gap in Equation~\eqref{eq:loss_diff}). For this SVM model, we use Hinge loss for the classification loss and linear loss for evaluating the fairness gap as mentioned in~\ref{sec:regret-approximation}. Note that the linear function used to approximate $\Delta$ can fall out of range ($[0,1]$). Having large $\lambda$ implies assigning more weight to this term and can result in a bad approximation. We therefore test with small $\lambda$, with $\lambda \in \{0.1 \epsilon, \epsilon, 10 \epsilon\}$ and show  the results when $\lambda =\epsilon $ for COMPAS and  $\lambda =0.1 \epsilon$ for Adult.

In Algorithm 2, we use Logistic regression models. Since we measure the exact $\Delta$~of the model and want that $\Delta$~to have a large impact on finding a new poisoning data point in each iteration. This leads to the selection of a larger $\lambda$. We choose $\lambda \in \{\epsilon, 10 \epsilon, 100 \epsilon\}$ and show  $\lambda =100 \epsilon $ in the evaluation for both datasets.

For both algorithms, we use $\eta=0.001$ as the learning rate.

To train a fair model, we use the post-processing method~\cite{hardt2016equality} and reductions approach~\citep{agarwal2018reductions}. We use the implementation of these algorithms provided in~\citep{agarwal2018reductions}\footnote{See https://github.com/fairlearn/fairlearn}. Note that while the post-processing approach allows achieving exact fairness on the training data, the implementation of the reductions approach requires a strictly positive $\delta$. We use default values for all hyper-parameters from the available implementation.

 It is important to note that, the output of these approaches is a {\em randomized classifier}. We, therefore, use the expected accuracy to measure the classification performance, given by
 \begin{align} \label{eq:accuracy}
\mathrm{Acc}(\theta;\D) = 1-\frac{1}{|\D|}\sum_{(x,y) \in \D}|f_\theta(x) - y| ,
 \end{align}
where $f_\theta(x)$ is the expected prediction of randomized classifier $f_\theta$. For the unconstrained models, $f_\theta$ is the deterministic prediction.

% !TEX root = ../main.tex

\subsection{Robustness evaluation} \label{appdx:exp-robustness}
In this section, we provide the detailed results about the test accuracy and fairness gap of the target models for both COMPAS and Adult datasets, as discussed in Section~\ref{sec:eval:results}. We show that the fair models are more vulnerable to the poisoning attack compared with the unconstrained model. In addition, the test accuracy and the fairness property of fair models are both compromised.

% !TEX root = ../main.tex

{\renewcommand{\arraystretch}{1.15}
\begin{table}[t!]

\caption{{\textbf{Test accuracy of attacked models under adversarial bias} -- COMPAS and Adult datasets, for $\epsilon=0.1$.   The numbers reflect test accuracy (defined in \eqref{eq:accuracy}) of attacked models. The lower the numbers are, the more effective the attacks are and the less robust the models are against the attacks. The numbers in bold are the \textit{smallest} accuracies of the target models against different attack algorithms in the adversarial bias or adversarial labeling setting. }
}
\label{table:robustness}
\centering
{\footnotesize
\begin{tabularx}{\columnwidth}{l l >{\centering\arraybackslash}X >{\centering\arraybackslash}X >{\centering\arraybackslash}X >{\centering\arraybackslash}X}
\toprule
Dataset & Attacks& Unconstrained Model & Fair~\cite{agarwal2018reductions} ($\delta=0.1$) & Fair~\cite{agarwal2018reductions} $(\delta=0.01)$ & Fair~\cite{hardt2016equality} ($\delta=0$)\\ \hline \hline
\multirow{10}{*}{COMPAS }

&Benign	 &93.7$\pm$5.6&94.0$\pm$5.6&93.5$\pm$1.7&87.4$\pm$3.8\\ \cline{2-6}

&Random sampling &94.3$\pm$0.9&92.1$\pm$1.6&89.2$\pm$1.6&84.3$\pm$1.1  \\

&Hard examples&94.2$\pm$1.0&91.5$\pm$1.7&88.2$\pm$2.6&83.7$\pm$1.1 \\

&Label flipping &94.0$\pm$1.0&92.8$\pm$1.4&90.8$\pm$2.0&84.4$\pm$1.2  \\ \cline{2-6}

&Adv. sampling~(Alg.~\ref{algorithm:minmax-fair}, $\lambda = 0$) \cite{steinhardt2017certified}   &\textbf{87.6$\pm$1.6}&81.5$\pm$1.7&78.0$\pm$2.3&\textbf{70.8$\pm$1.6}\\

&Adv. sampling~(Alg.~\ref{algorithm:minmax-fair}, $\lambda = 100\epsilon$)&-&\textbf{80.6$\pm$1.9}&\textbf{73.1$\pm$3.0}&71.6$\pm$1.4\\

&Adv. sampling~(Alg.~\ref{algorithm:OGD}, $\lambda = \epsilon$)&-&81.2$\pm$1.4&77.6$\pm$2.2&73.9$\pm$1.8\\

\cline{2-6}
&Adv. labeling~(Alg.~\ref{algorithm:minmax-fair}, $\lambda = 0$) \cite{steinhardt2017certified}   &\textbf{84.8$\pm$1.7}&\textbf{76.3$\pm$1.8}&73.3$\pm$1.7&\textbf{68.3$\pm$1.7} \\

&Adv. labeling~(Alg.~\ref{algorithm:minmax-fair}, $\lambda = 100\epsilon$)&-&77.6$\pm$1.3&\textbf{71.0$\pm$2.8}&70.1$\pm$1.3  \\

&Adv. labeling~(Alg.~\ref{algorithm:OGD}, $\lambda = \epsilon$)& -&80.1$\pm$3.3&76.6$\pm$3.2&70.5$\pm$1.8 \\

\hline \hline

\multirow{10}{*}{Adult }
&Benign&94.3$\pm$0.3&94.3$\pm$0.3&93.8$\pm$0.3&92.7$\pm$0.4\\\cline{2-6}

&Random sampling&94.3$\pm$0.3&94.3$\pm$0.3&93.7$\pm$0.3&92.3$\pm$0.3 \\

&Hard examples&94.2$\pm$0.3&94.1$\pm$0.3&92.6$\pm$0.4&90.8$\pm$0.4 \\

&Label flipping&93.3$\pm$0.4&91.0$\pm$0.5&89.2$\pm$0.4&88.2$\pm$0.4 \\ \cline{2-6}

&Adv. sampling~(Alg.~\ref{algorithm:minmax-fair}, $\lambda = 0$) \cite{steinhardt2017certified}   &\textbf{94.0$\pm$0.3}&93.1$\pm$0.5&\textbf{91.7$\pm$0.6}&89.6$\pm$0.5\\

&Adv. sampling~(Alg.~\ref{algorithm:minmax-fair}, $\lambda = 100\epsilon$) &-&92.5$\pm$0.5&92.2$\pm$0.4&90.1$\pm$0.5\\

&Adv. sampling~(Alg.~\ref{algorithm:OGD}, $\lambda = 0.1\epsilon$)&-&\textbf{92.3$\pm$0.5}&92.3$\pm$0.5&\textbf{89.3$\pm$0.4}\\
 \cline{2-6}

&Adv. labeling~(Alg.~\ref{algorithm:minmax-fair}, $\lambda = 0$) \cite{steinhardt2017certified}   &\textbf{89.3$\pm$0.9}&87.2$\pm$0.6&83.9$\pm$0.5&84.6$\pm$0.6 \\

&Adv. labeling~(Alg.~\ref{algorithm:minmax-fair}, $\lambda =100\epsilon$)&-&\textbf{85.5$\pm$1.2}&\textbf{80.9$\pm$1.7}&\textbf{81.1$\pm$1.6}\\

&Adv. labeling~(Alg.~\ref{algorithm:OGD}, $\lambda = 0.1\epsilon$)&-&87.5$\pm$0.6&83.9$\pm$0.8&84.6$\pm$0.7\\

\bottomrule \\
\end{tabularx}
}

\end{table}
}

% !TEX root = ../main.tex

{\renewcommand{\arraystretch}{1.15}
\begin{table}[t!]
\caption{{\textbf{Fairness gap of attacked models on test data} -- COMPAS and Adult datasets, for $\epsilon=0.1$.  The fairness gap $\Delta$ is defined in \eqref{eq:eo-fairness}.  The numbers reflect how unfair the model is with respect to the protected group in the test data.  For fair models, compare numbers with $\delta$ (the guaranteed fairness gap on training data).  The farther apart $\Delta$ and $\delta$ are, the less the fairness generalization is on test data. The numbers in bold are the \textit{largest} $\Delta$ of the target models against different attack algorithms in the adversarial bias or adversarial labeling setting. }
}
\label{table:robustness-fairness}
\centering
{\footnotesize
\begin{tabularx}{\columnwidth}{l l >{\centering\arraybackslash}X >{\centering\arraybackslash}X >{\centering\arraybackslash}X >{\centering\arraybackslash}X}
\toprule
Dataset & Attacks& Unconstrained Model & Fair~\cite{agarwal2018reductions} ($\delta=0.1$) & Fair~\cite{agarwal2018reductions} $(\delta=0.01)$ & Fair~\cite{hardt2016equality} ($\delta=0$)\\ \hline \hline
\multirow{10}{*}{COMPAS }
&Benign&0.21$\pm$0.07&0.11$\pm$0.06&0.06$\pm$0.04&0.07$\pm$0.04\\ \cline{2-6}
&Random Sampling&0.19$\pm$0.07&0.08$\pm$0.03&0.11$\pm$0.05&0.13$\pm$0.07\\
&Hard examples&0.19$\pm$0.08&0.09$\pm$0.03&0.13$\pm$0.05&0.15$\pm$0.07 \\
&Label flipping &0.23$\pm$0.07&0.09$\pm$0.04&0.07$\pm$0.04&0.10$\pm$0.06\\ \cline{2-6}

&Adv. sampling~(Alg.~\ref{algorithm:minmax-fair}, $\lambda = 0$) \cite{steinhardt2017certified}          &0.26$\pm$0.08    &0.19$\pm$0.07    &0.30$\pm$0.07    &0.27$\pm$0.08 \\
&Adv. sampling~(Alg.~\ref{algorithm:minmax-fair}, $\lambda = 100\epsilon$)&-               &0.29$\pm$0.06    &0.37$\pm$0.09    &0.53$\pm$0.05 \\
&Adv. sampling~(Alg.~\ref{algorithm:OGD}, $\lambda = \epsilon$)   &-               &0.12$\pm$0.07    &0.21$\pm$0.10    &0.25$\pm$0.13 \\
\cline{2-6}

&Adv. labeling~(Alg.~\ref{algorithm:minmax-fair}, $\lambda = 0$)  \cite{steinhardt2017certified}        &\textbf{0.28$\pm$0.08}  &0.13$\pm$0.05  &0.19$\pm$0.08    &0.25$\pm$0.08\\
&Adv. labeling~(Alg.~\ref{algorithm:minmax-fair}, $\lambda = 100\epsilon$)&-                 &\textbf{0.28$\pm$0.05}  &\textbf{0.39$\pm$0.08}   &\textbf{0.55$\pm$0.04}\\
&Adv. labeling~(Alg.~\ref{algorithm:OGD}, $\lambda = \epsilon$)&-    &0.11$\pm$0.06    &0.12$\pm$0.04      &0.13$\pm$0.09\\
\hline \hline

\multirow{9}{*}{Adult }
&Benign&0.07$\pm$0.03&0.07$\pm$0.03&0.04$\pm$0.02&0.03$\pm$0.02 \\ \cline{2-6}
&Random sampling&0.07$\pm$0.03&\textbf{0.07$\pm$0.03}&0.03$\pm$0.02&0.03$\pm$0.02\\
&Hard examples&\textbf{0.08$\pm$0.03}&0.06$\pm$0.03&0.04$\pm$0.02&0.06$\pm$0.03\\
&Label flipping&\textbf{0.08$\pm$0.04}&0.10$\pm$0.04&\textbf{0.24$\pm$0.04}&0.24$\pm$0.04\\ \cline{2-6}

&Adv. sampling~(Alg.~\ref{algorithm:minmax-fair}, $\lambda = 0$) \cite{steinhardt2017certified}   &0.06$\pm$0.03&0.03$\pm$0.02&\textbf{0.12$\pm$0.03}&0.17$\pm$0.04 \\

&Adv. sampling~(Alg.~\ref{algorithm:minmax-fair}, $\lambda = 100\epsilon$&-&0.06$\pm$0.03&0.07$\pm$0.02&\textbf{0.19$\pm$0.04} \\

&Adv. sampling~(Alg.~\ref{algorithm:OGD}, $\lambda = 0.1\epsilon$)&-&0.05$\pm$0.03&0.05$\pm$0.02&0.14$\pm$0.05\\

\cline{2-6}
&Adv. labeling~(Alg.~\ref{algorithm:minmax-fair}, $\lambda = 0$) \cite{steinhardt2017certified}   &0.06$\pm$0.04&0.07$\pm$0.03&0.21$\pm$0.04&0.27$\pm$0.04\\

&Adv. labeling~(Alg.~\ref{algorithm:minmax-fair}, $\lambda = 100\epsilon$)&-&\textbf{0.18$\pm$0.06}&0.09$\pm$0.08&0.09$\pm$0.19\\

&Adv. labeling~(Alg.~\ref{algorithm:OGD}, $\lambda = 0.1\epsilon$)&-&0.11$\pm$0.04&0.23$\pm$0.05&\textbf{0.34$\pm$0.04}\\

\bottomrule \\
\end{tabularx}
}
\end{table}
}

\paragraph{Test accuracy}
In the Table~\ref{table:robustness}, we compare the effect of poisoning attacks on unconstrained models (without fairness constraint) with different fair models at different desired fairness levels $\delta$ on two datasets (COMPAS and Adult), when the attacker controls 10\% of the training data i.e., $\epsilon = 0.1$.
The ``Unconstrained Model'' column corresponds to the test accuracy of the unconstrained model.
The ``Fair~\cite{agarwal2018reductions} ($\delta=0.1$)'' and the ``Fair~\cite{agarwal2018reductions} ($\delta=0.01$)'' columns respectively correspond to the test accuracy of the fair models trained with the Reductions approach~\cite{agarwal2018reductions} at $\delta=0.1$ and $\delta=0.01$.
The ``Fair~\cite{hardt2016equality} ($\delta=0$)'' column corresponds to the fair model trained with the Post-processing method~\cite{hardt2016equality} with exact fairness, i.e $\delta=0$.
For each dataset, ``Benign'' row shows the test accuracy for models trained on data without any poisoning attack i.e., $\epsilon = 0$. We compare these with the test accuracy of corresponding models that are learned from poisoned data.

We observe that when the models are trained with fairness, the drop in test accuracy are more significant than when the constraints are absent in both adversarial labeling and adversarial sampling setting.
We notice that our attacks outperform the baseline attacks on both datasets in both adversarial bias setting (adversarial labeling and adversarial sampling).
Even in the adversarial sampling setting, using our proposed attack strategies, the attacker manages to reduce the test accuracy of the target model more than labeling flipping attack. This shows the effectiveness of our attack strategies.
In addition, an increase in the desired fairness level, i.e when $\delta$ decreases, correlates with an increase in the accuracy drop. This shows fair models are more vulnerable to poisoning attacks than unconstrained models.
Note that on Adult, the drops are not as large as they are on COMPAS. However, the constant prediction classifier trained on Adult dataset can achieve 81\% accuracy. In other words,  the fair models trained on poisoned datasets can only perform barely better than constant prediction classifier. Our attacks are still effective on Adult dataset.

The overall results in the Table~\ref{table:robustness} reflect the effectiveness of our strategies and provide evidence that the fair model is more vulnerable than the unconstrained model.

\paragraph{Fairness gap}
% setting
In Table~\ref{table:robustness-fairness}, we compare the effect of poisoning attacks on unconstrained model (without fairness constraint) with different fair models at different $\delta$ on two datasets (COMPAS and Adult), when $\epsilon = 0.1$. Similar to the Table~\ref{table:robustness}, the columns 4-7 (``Unconstrained Model'', ``Fair~\cite{agarwal2018reductions} ($\delta=0.1$)'', ``Fair~\cite{agarwal2018reductions} ($\delta=0.01$)'', ``Fair~\cite{hardt2016equality} ($\delta=0$))  show the fairness gap on test datasets $\Delta(\theta;\D_\test)$ of the unconstrained models, of the fair models trained with the Reductions approach~\cite{agarwal2018reductions} at $\delta=0.1$ and $\delta=0.01$ and of fair models trained with the Post-processing method~\cite{hardt2016equality} with exact fairness, i.e $\delta=0$ , respectively.
For each dataset, ``Benign'' row shows the fairness gap on the test dataset for models trained on data without any poisoning attack i.e., $\epsilon = 0$.
We compare these with the fairness gap on test dataset of corresponding models learned from poisoned data.

We notice that the fairness gap of fair models trained on the poisoned data is much larger than those of fair models trained on clean data (as shown in ``Benign'' row). This implies fair models trained on poisoned data become less fair on test dataset when attacks are present in both adversarial sampling bias and adversarial labeling bias setting.

Interestingly, the fairness gap of the fair models is larger than that of the unconstrained model when  adversarial bias are present in the training dataset. In addition, an increase in the desired fairness level, i.e when $\delta$ decreases, is associated with an increase in the fairness gap on the test dataset. This shows that not only does poisoning attacks can cause accuracy drops, but they are also able to make fair models more discriminatory on test data than unconstrained models.

%\clearpage
% !TEX root = ../main.tex

\subsection{Conflict between fairness and robustness.} \label{appdx:exp-attack}

\input{plot/accuracy_COMPAS_appendix.tex}

\input{plot/acc_labeling_ADULT.tex}

In Figure~\ref{fig:accuracy-compas} and Figure~\ref{fig:labeling-attacks-adult}, we compare the test accuracy of target model at different fractions of poisoning data selected using all the attack strategies on COMPAS and Adult dataset respectively.

On COMPAS dataset, for unconstrained models, only Algorithm~\ref{algorithm:minmax-fair}  ($\lambda = 0$) has an effect, causing a 10\% drop in accuracy when $\epsilon = 0.2$ and the accuracy hardly decreases when $\epsilon$ increases from $0.1$ to $0.2$ in the adversarial labeling bias setting. We observe a similar result for the adversarial sampling bias.
On fair models, we can observe that both Algorithm~\ref{algorithm:OGD} and Algorithm~\ref{algorithm:minmax-fair} (with $\lambda = 100\epsilon$ and 0) have a significantly better performance than Label flipping attack and adding Hard examples.
The performance of Algorithm~\ref{algorithm:minmax-fair} is better than that of Algorithm~\ref{algorithm:OGD}, as Algorithm~\ref{algorithm:OGD} uses a surrogate linear loss for evaluating the fairness gap $\Delta(\theta; \D_\clean \cup \D_\poisoning)$, whereas Algorithm~\ref{algorithm:minmax-fair} computes the exact fairness gap.  
Algorithm~\ref{algorithm:minmax-fair} with $\lambda = 100\epsilon$ and 0 show similar results at smaller fractions of poisoning data ($\epsilon < 0.1$) and start to diverge at higher values of $\epsilon$ due to increase in contribution of fairness gap term, with the former approaching the Constant classifier baseline at $\epsilon = 0.2$.

For the Adult dataset, notice that the Constant prediction baseline has good accuracy ($>$80\%).
Hence, the relative accuracy drop on the Adult dataset is not as significant as that on the COMPAS dataset.
However, we can still observe similar results that compared to unconstrained models, fair models witness a greater accuracy drop, with our proposed attacks perform significantly better than the baselines. The three algorithms have similar results both when the fair models are trained with~\cite{hardt2016equality} and~\cite{agarwal2018reductions}.
We observe that the plots for Algorithm~\ref{algorithm:minmax-fair} fluctuate in both adversarial sampling and adversarial labeling settings when $\epsilon$ increases. The detailed explanations are presented in Appendix~\ref{appdx:adult-exp}.

% !TEX root = ../main.tex

\subsection{Effect of fairness level on impact of adversarial bias}\label{appdx:exp-fairness-level}

In Figure~\ref{fig:fairness-measure} and Figure~\ref{fig:fairness-measure-adult-labeling}, we show the effect of fairness level $\delta$ on impact of adversarial bias for COMPAS and Adult dataset respectively.
To measure the influence of fairness level $\delta$, we generate poisoning data using Algorithm~\ref{algorithm:minmax-fair} with $\lambda = 100\epsilon$ and Algorithm~\ref{algorithm:OGD}  with $\lambda = \epsilon$  for both adversarial labeling and adversarial sampling settings on COMPAS dataset. On Adult dataset, we generate poisoning data using Algorithm~\ref{algorithm:minmax-fair} with $\lambda = 100\epsilon$ and Algorithm~\ref{algorithm:OGD}  with $\lambda = 0.1 \epsilon$.
We measure the test accuracy of models learned with different values of fairness level $\delta$.
We can observe that the drop in accuracy for the same fraction of poisoning data is higher for models with stricter fairness constraints (smaller $\delta$). This shows that the more fair a model tries to be, the more vulnerable it becomes to poisoning attacks.
In Figure~\ref{fig:fairness-measure} and Figure~\ref{fig:fairness-measure-adult-labeling}, we also present the majority (the protected group with a larger number of samples) accuracy and minority accuracy. It is clear that the accuracy drop for the majorities is more significant than that for minorities for all the cases. In the Appendix~\ref{appdx:exp-subgroups}, we show that the algorithms choose the points with large loss from the smallest subgroup (subgroups are determined by the protected attribute and the label). As a result, in order to achieve fairness on the poisoned dataset, fair models are more likely to reduce the accuracy of the majority group.

We notice that, in Figure~\ref{fig:fairness-measure-adult-labeling}(a), accuracy plots fluctuate when $\epsilon$ increases, which is not observed in Figure~\ref{fig:fairness-measure-adult-labeling}(b) and Figure~\ref{fig:fairness-measure}. In Appendix~\ref{appdx:adult-exp}, we present the detailed explanations.

\input{plot/fairness-measure.tex}
\input{plot/fairness-measure_ADULT_labeling.tex}

%\clearpage
% !TEX root = ../main.tex

\subsection{Performance of Algorithm 2 with $\lambda=100\epsilon$ on Adult dataset}\label{appdx:adult-exp}

We notice that there are accuracy fluctuations for the fair models evaluated on the poisoning data selected by Algorithm~\ref{algorithm:minmax-fair} with $\lambda = 100\epsilon$ for the Adult dataset.
Recall that, the algorithm selects poisoning data from the attack dataset \emph{without replacement}. In each iteration, it selects the data point that maximizes the classification loss plus the fairness gap (as at Line~\ref{line:SA:pick} in Algorithm~\ref{algorithm:minmax-fair}).
As shown in Figure 11 and 12, Algorithm~\ref{algorithm:minmax-fair} with $\lambda=100\epsilon$ has a significant preference to select poisoning data that would result in a large fairness gap. Thus, it chooses data that would fall into the smallest subgroup in the training set. This is shown to be very effective in the case of COMPAS dataset and can lead to a sharp decrease in the model accuracy even for small $\epsilon$ (see Figure~\ref{fig:accuracy-compas-paper}). However, this greedy algorithm in the case of small attack sets, and no repetition in the poisoning data, can result in the degradation of attack performance for larger $\epsilon$ values, as we see in Figure~\ref{fig:labeling-attacks-adult}.

In more detail, the reason behind the attack behavior on Algorithm 2 with $\lambda=100 \epsilon$ for larger $\epsilon$ on the Adult dataset is the following.

In the adversarial sampling setting, the size of the smallest subgroup ($y=+$ and $s=1$) in the attack dataset is only equivalent to $\epsilon = 7.69 \%$ poisoning data. For larger values of $\epsilon$, the attacker will choose data from other subgroups, which cannot further harm the model accuracy, thus reduces the effect of the data poisoning.

In the adversarial labeling setting, with large $\epsilon$, the number of poisoning data points is larger than the size of subgroups with positive labels ($y=+$) in $\D_\clean$; typically when $\epsilon=0.2$, $|\D_\poisoning|>3000$ whereas the number of samples with $y=+$ in $\D_\clean$ is 2943 on average. Relying on choosing points to select data points to maximize $\Delta$ results in the possibility of choosing points from any subgroups with positive labels (as shown in Figure~\ref{fig:poisoning-acc-adult-labeling}(b)), since poisoning data points can dominate any of these subgroups. In other words, the smallest subgroup of the clean training dataset is not the smallest subgroup of the training dataset.

In summary, the fluctuation in the figures is due to the significant effect of maximizing the fairness gap.
In fact, in both adversarial sampling and labeling settings, Algorithm~\ref{algorithm:minmax-fair} ($\lambda=100\epsilon$) achieves the same performances with a smaller $\epsilon$ as the other attacks with larger $\epsilon$. These results, in effect, reflects the effectiveness of the algorithm.

% !TEX root = ../main.tex

\subsection{Training accuracy of poisoned dataset} \label{appdx:exp-train-acc}

In Figure~\ref{fig:poisoning-acc} and Figure~\ref{fig:poisoning-acc-adult-labeling}, the accuracy of the unconstrained model on the poisoning data $\D_\poisoning$ is compared with the corresponding accuracy of fair models with different fairness level $\delta$ on COMPAS and Adult dataset respectively.
The poisoning data $\D_\poisoning$ is selected using Algorithm ~\ref{algorithm:minmax-fair} with $\lambda = 0$ for the unconstrained model for both adversarial labeling and adversarial sampling settings.
For fair models, we evaluate on poisoning dataset selected using Algorithm ~\ref{algorithm:minmax-fair} with $\lambda = 100\epsilon$ and $\lambda = 0.1 \epsilon$ on Adult dataset.
For the fair models trained on COMPAS dataset, we evaluate on poisoning dataset selected using Algorithm ~\ref{algorithm:minmax-fair} with $\lambda = 100\epsilon$ and $\lambda = \epsilon$.

On the COMPAS dataset, from Figure~\ref{fig:poisoning-acc}, we can observe that, for the fair model using \cite{agarwal2018reductions}, as the value of $\delta$ decreases, the accuracy of the model increases on poisoning data $\D_\poisoning$ and decreases on $\D_\clean$. This implies poisoning data reduce fair models' ability to learn from clean data.
In Figure~\ref{fig:poisoning-acc}, note that the post-processing method does not impose the fairness constraint during the training but uses the predictions of the unconstrained model trained in the standard way and makes corrections to achieve fairness. Depending on which subgroups poisoning data points belong to, fair models trained with \cite{hardt2016equality} show different behavior. For example, when poisoning data points have the sensitive attribute $s=1$ and label $y=+$, post-processing tends to make corrections for the majority group (observed in Figure~\ref{fig:fairness-measure}(a)). The accuracy of poisoning data remains similar to that of the unconstrained model, but the accuracy of clean data decreases significantly. 

On Adult dataset, from Figure~\ref{fig:poisoning-acc-adult-labeling}, we can also observe that the fair models have a higher accuracy on the poisoning data compared with the unconstrained model. In Figure~\ref{fig:poisoning-acc-adult-labeling}(a), we notice that the accuracy of the poisoning data increases when $\epsilon$ increases from 0 to 0.05. After that, the accuracy decreases for the fair models. We provide detailed explanations in Appendix~\ref{appdx:adult-exp}.

\input{plot/train_acc.tex}
\input{plot/train_acc_ADULT_labeling.tex}

%\clearpage
% !TEX root = ../main.tex

\subsection{Fairness gap of the regualar model on poisoned dataset} \label{appdx:exp-fairness-gap}

To investigate the effectiveness of our attacks, we train an unconstrained classifier without any fairness constraints and measure the fairness gap $\Delta(\theta;\D_\clean \cup \D_\poisoning)$ of the poisoned training dataset generated by different algorithms.
Figure~\ref{fig:EOgap} and Figure~\ref{fig:EOgap-adult-labeling} show the results for COMPAS and Adult dataset respectively.

On the COMPAS dataset, from Figure~\ref{fig:accuracy-compas}, we observe a correlation between attack performance and its fairness gap on the training data. For the baseline attacks (Label flipping, Random sampling, Hard examples), the slight increase in  $\Delta$ corresponds to a small accuracy drop on the test data. For our attacks,  $\Delta$ quickly increases when $\epsilon \le 0.1$, and the corresponding test accuracy also show significant declines. When $\epsilon>0.1$, for Algorithm~\ref{algorithm:OGD} and Algorithm~\ref{algorithm:minmax-fair}  with $\lambda=0$, $\Delta$ stops increasing and the test accuracy begins to level. By contrast, for Algorithm~\ref{algorithm:minmax-fair}  with $\lambda=100\epsilon$, $\Delta$  continues to rise and the attack performance becomes significantly better than all other attacks.

On the Adult dataset, in Figure~\ref{fig:EOgap-adult-labeling} adversarial sampling, the $\Delta$ quickly increases when $\epsilon \le 0.05$ for our attacks, and then decreases as we increase the $\epsilon$. A similar wave can be observed from Figure~\ref{fig:EOgap-adult-labeling} adversarial labeling.
The detailed explanations are presented in the Appendix~\ref{appdx:adult-exp}.

% !TEX root = ../main.tex

\begin{figure*}[t!]
 \centering
 \resizebox{0.66\columnwidth}{!}{%

     \begin{tikzpicture}

     \begin{axis}
       [name=plot1, legend style={font=\small},title= {Unconstrained Model - Adv. Sampling},xlabel={$\epsilon$},ylabel={Fairness gap on training data}, ymin = 0, ymax =0.75,xtick={0,0.05,0.1,0.15,0.2}, xticklabels={0,0.05,0.1,0.15,0.2}, grid = major,
      ]

     %alg2 lambda = 0
		\addplot[solid, blue, mark = *, mark options={solid,}]table[skip first n=1,x index=1,y index=2, col sep=comma] {"plot2/reduction_csv/attacker-nfminmax-0.01-EO-gap.csv"};

      %alg2 lambda*
		\addplot[violet, mark = o,]table[skip first n=1,x index=1,y index=2, col sep=comma] {"plot2/reduction_csv/attacker-eominmaxlossnf100-0.01-EO-gap.csv"};

	% alg1
          \addplot[teal, mark = x] table[skip first n=1,x index=1, y index=2, col sep=comma] {"plot2/reduction_csv/attacker-newregnf-1.0-0.01-EO-gap.csv"};

        %random sampling
        \addplot[solid, gray, mark = triangle, mark options={solid,}] table[skip first n=1,x index=1, y index=2, col sep=comma] {"plot2/reduction_csv/attacker-usel-0.01-EO-gap.csv"};

	%hard example
          \addplot[solid, brown, mark = diamond, mark options={solid,}] table[skip first n=1,x index=1, y index=2, col sep=comma] {"plot2/reduction_csv/noise-usel-0.01-EO-gap.csv"};
  %label flipping
          %\addplot[solid, red, mark = square, mark options={solid,}] table[skip first n=1,x index=1, y index=2, col sep=comma] {"plot2/reduction_csv/attacker-uflip-0.01-EO-gap.csv"};
     \end{axis}

      \begin{axis}
       [name=plot2, , at=(plot1.south east), anchor=south west, xshift=0.8cm, title= {Unconstrained Model - Adv. Labeling},xlabel={$\epsilon$},ylabel={}, ymin = 0, ymax =0.75,xtick={0,0.05,0.1,0.15,0.2}, xticklabels={0,0.05,0.1,0.15,0.2}, grid = major, legend entries = {
       Alg.~\ref{algorithm:minmax-fair} ($\lambda = 0$) \cite{steinhardt2017certified},
       Adg.~\ref{algorithm:minmax-fair} ($\lambda = 100\epsilon$),Alg.~\ref{algorithm:OGD} ($\lambda = \epsilon$),
      Random sampling, Hard examples, Label flipping,
       }, legend style={at={(0, -0.15)}, anchor=north, draw=none, legend columns=4}]

      % \addplot[solid, blue, mark =*, mark options={solid,}]table[skip first n=1,x index=1,y index=2, col sep=comma] {"plot2/wo-rep-results/attacker-eominmax3-0.01-EO-gap.csv"};

     %alg2 lambda = 0
		\addplot[solid, blue, mark = *, mark options={solid,}]table[skip first n=1,x index=1,y index=2, col sep=comma] {"plot2/reduction_csv/attacker-minmax2-0.01-EO-gap.csv"};

      %alg2 lambda*
		\addplot[violet, mark = o,]table[skip first n=1,x index=1,y index=2, col sep=comma] {"plot2/reduction_csv/attacker-eominmaxloss-0.01-EO-gap.csv"};

	% alg1
          \addplot[teal, mark = x] table[skip first n=1,x index=1, y index=2, col sep=comma] {"plot2/reduction_csv/attacker-newreg-1.0-0.01-EO-gap.csv"};

        %random sampling
        \addplot[solid, gray, mark = triangle, mark options={solid,}] table[skip first n=1,x index=1, y index=2, col sep=comma] {"plot2/reduction_csv/attacker-usel-0.01-EO-gap.csv"};

	%hard example
          \addplot[solid, brown, mark = diamond, mark options={solid,}] table[skip first n=1,x index=1, y index=2, col sep=comma] {"plot2/reduction_csv/noise-usel-0.01-EO-gap.csv"};
  %label flipping
          \addplot[solid, red, mark = square, mark options={solid,}] table[skip first n=1,x index=1, y index=2, col sep=comma] {"plot2/reduction_csv/attacker-uflip-0.01-EO-gap.csv"};
     \end{axis}

     \end{tikzpicture}
}
\caption{Fairness gap on the unconstrained model with respect to the training data -- COMPAS dataset. An unconstrained model is learned on the training data that includes poisoning data generated by Alg.~\ref{algorithm:minmax-fair} ($\lambda = 100\epsilon$). The fairness gap $\Delta$ is defined in \eqref{eq:eo-fairness}. The numbers reflect how unfair this unconstrained model is with respect to the protected group on the training  data.
}

\label{fig:EOgap}
\end{figure*}
% !TEX root = ../main.tex

\begin{figure*}[t!]
 \centering
 \resizebox{0.66\columnwidth}{!}{%
     \begin{tikzpicture}
     \begin{axis}
       [name=plot1, title= {Unconstrained Model - Adv. Sampling},xlabel={$\epsilon$},ylabel={Fairness gap on training data}, ymin = 0, ymax =0.75,xtick={0,0.05,0.1,0.15,0.2}, xticklabels={0,0.05,0.1,0.15,0.2}, grid = major]

     %alg2 lambda = 0
		\addplot[solid, blue, mark = *, mark options={solid,}]table[skip first n=1,x index=1,y index=2, col sep=comma] {"plot2/adults_reduction_csv/attacker-nfminmax-0.01-EO-gap.csv"};

      %alg2 lambda*
		\addplot[violet, mark = o,]table[skip first n=1,x index=1,y index=2, col sep=comma] {"plot2/adults_reduction_csv/attacker-eominmaxlossnf-0.01-EO-gap.csv"};

	% alg1
          \addplot[teal, mark = x] table[skip first n=1,x index=1, y index=2, col sep=comma] {"plot2/adults_reduction_csv/attacker-newregnf-0.1-0.01-EO-gap.csv"};
	 %label flipping
       % \addplot[solid, red, mark = square, mark options={solid,}] table[skip first n=1,x index=1, y index=2, col sep=comma] {"plot2/adults_reduction_csv/attacker-uflip-0.01-EO-gap.csv"};
        %random sampling
        \addplot[solid, gray, mark = triangle, mark options={solid,}] table[skip first n=1,x index=1, y index=2, col sep=comma] {"plot2/adults_reduction_csv/attacker-usel-0.01-EO-gap.csv"};

	%hard example
          \addplot[solid, brown, mark = diamond, mark options={solid,}] table[skip first n=1,x index=1, y index=2, col sep=comma] {"plot2/adults_reduction_csv/noise-usel-0.01-EO-gap.csv"};

     \end{axis}

     \begin{axis}
       [name=plot2, , at=(plot1.south east), anchor=south west, xshift=0.8cm, title= {Unconstrained Model - Adv. Labeling},xlabel={$\epsilon$},ylabel={}, ymin = 0, ymax =0.75,xtick={0,0.05,0.1,0.15,0.2}, xticklabels={0,0.05,0.1,0.15,0.2}, grid = major, legend entries = {
       Alg.~\ref{algorithm:minmax-fair} ($\lambda = 0$) \cite{steinhardt2017certified},
       Alg.~\ref{algorithm:minmax-fair} ($\lambda = 100\epsilon$), Alg.~\ref{algorithm:OGD}  ($\lambda =0.1 \epsilon$),
        Random sampling, Hard example,Label flipping
       }, legend style={at={(0, -0.15)}, anchor=north, draw=none, legend columns=4}]

      % \addplot[solid, blue, mark =*, mark options={solid,}]table[skip first n=1,x index=1,y index=2, col sep=comma] {"plot2/wo-rep-results/attacker-eominmax3-0.01-EO-gap.csv"};

     %alg2 lambda = 0
		\addplot[solid, blue, mark = *, mark options={solid,}]table[skip first n=1,x index=1,y index=2, col sep=comma] {"plot2/adults_reduction_csv/attacker-minmax2-0.01-EO-gap.csv"};

      %alg2 lambda*
		\addplot[violet, mark = o,]table[skip first n=1,x index=1,y index=2, col sep=comma] {"plot2/adults_reduction_csv/attacker-eominmaxloss-0.01-EO-gap.csv"};

	% alg1
          \addplot[teal, mark = x] table[skip first n=1,x index=1, y index=2, col sep=comma] {"plot2/adults_reduction_csv/attacker-newreg-0.1-0.01-EO-gap.csv"};

        %random sampling
        \addplot[solid, gray, mark = triangle, mark options={solid,}] table[skip first n=1,x index=1, y index=2, col sep=comma] {"plot2/adults_reduction_csv/attacker-usel-0.01-EO-gap.csv"};

	%hard example
          \addplot[solid, brown, mark = diamond, mark options={solid,}] table[skip first n=1,x index=1, y index=2, col sep=comma] {"plot2/adults_reduction_csv/noise-usel-0.01-EO-gap.csv"};
%label flipping
     \addplot[solid, red, mark = square, mark options={solid,}] table[skip first n=1,x index=1, y index=2, col sep=comma] {"plot2/adults_reduction_csv/attacker-uflip-0.01-EO-gap.csv"};
     \end{axis}
     \end{tikzpicture}
}
\caption{Fairness gap on the unconstrained model with respect to the training data -- Adult dataset. An unconstrained model is learned on the training data that includes poisoning data generated by Alg.~\ref{algorithm:minmax-fair} ($\lambda = 100\epsilon$). The fairness gap $\Delta$ is defined in \eqref{eq:eo-fairness}. The numbers reflect how unfair this unconstrained model is with respect to the protected group on the training  data.
}
\label{fig:EOgap-adult-labeling}
\end{figure*}

% !TEX root = ../main.tex

\subsection{Distribution of the poisoning data} \label{appdx:exp-subgroups}
In Figure~\ref{fig:subgroups} and Figure~\ref{fig:subgroups-adult}, we show group membership based on the protected attribute and labels of the data which are generated via different attack strategies on COMPAS and Adult dataset respectively.

Note that, for COMPAS dataset, we use the race as the protected attribute ($s = 0$ represents ``White'' and $s=1$ represents ``Black''). The number of samples with $s=1,y=+$ is the smallest among the four combinations of labels and the protected attribute. As shown in Figure~\ref{fig:accuracy-compas}, the attack algorithms in the first two rows are more effective compared with baselines in the second row. As shown in sub-figures (a)-(f), in more effective attacks,  most poisoning data points are from the smallest subgroup (positive labeled points from the minority).

On Adult dataset, we observe the similar phenomena, and we use the gender as the protected attribute where $s = 0$ represents ``Male'' and $s=1$ represents ``Female''. It is also important to pay attention to the fact that on this dataset, the number of samples with $s=1$ is relatively small (33.2\%) and those with $s=1,y=+$ only account for 3.6\% of the dataset. Due to this, finding influential data points from this subgroup is not always possible. Instead, as shown in Figure~\ref{fig:subgroups-adult}, our attacks mainly select data with $ y = +$. Algorithm~\ref{algorithm:minmax-fair} with $\lambda=100\epsilon$ finds more point with $s=1,y=+$  and as shown in Figure~\ref{fig:labeling-attacks-adult} has a marginally better performance.

% !TEX root = ../main.tex

\begin{figure*}
 \centering
 \resizebox{\columnwidth}{!}{%
\large{
     \begin{tikzpicture}

       \begin{axis}
         [name=plot1, legend style={font=\small},title={(a) Adv. sampling~(Alg.~\ref{algorithm:minmax-fair}, $\lambda = 0$) \cite{steinhardt2017certified}},xlabel={$\epsilon$}, ymin = 0, ymax =100, xtick={0,0.05,0.1,0.15,0.2}, xticklabels={0,0.05,0.1,0.15,0.2},ylabel={Percentage}, stack plots=y, area style, legend entries = {\text{$s= 0, y = -$},\text{$s= 0, y = +$}, \text{$s= 1, y = -$}, \text{$s= 1, y = +$}}, legend pos=south west]
         \addplot table[skip first n=1,x index=1, y index=2, col sep=comma] {"plot2/subgroups/nfminmax-subgroups.csv"}
         \closedcycle;

           \addplot table[skip first n=1,x index=1, y index=3, col sep=comma] {"plot2/subgroups/nfminmax-subgroups.csv"}
           \closedcycle;

           \addplot table[skip first n=1,x index=1, y index=4, col sep=comma] {"plot2/subgroups/nfminmax-subgroups.csv"}
           \closedcycle;

           \addplot table[skip first n=1,x index=1, y index=5, col sep=comma] {"plot2/subgroups/nfminmax-subgroups.csv"}
           \closedcycle;

       \end{axis}

     \begin{axis}
       [name=plot2, at=(plot1.south east), anchor=south west, xshift=0.8cm, legend style={font=\small},title= {(b) Adv. sampling~(Alg.~\ref{algorithm:minmax-fair}, $\lambda = 100\epsilon$)},xlabel={$\epsilon$}, ymin = 0, ymax =100, xtick={0,0.05,0.1,0.15,0.2}, xticklabels={0,0.05,0.1,0.15,0.2}, stack plots=y, area style,
       ]
       \addplot table[skip first n=1,x index=1, y index=2, col sep=comma] {"plot2/subgroups/eominmaxlossnf100-subgroups.csv"}
       \closedcycle;

         \addplot table[skip first n=1,x index=1, y index=3, col sep=comma] {"plot2/subgroups/eominmaxlossnf100-subgroups.csv"}
         \closedcycle;

         \addplot table[skip first n=1,x index=1, y index=4, col sep=comma] {"plot2/subgroups/eominmaxlossnf100-subgroups.csv"}
         \closedcycle;

         \addplot table[skip first n=1,x index=1, y index=5, col sep=comma] {"plot2/subgroups/eominmaxlossnf100-subgroups.csv"}
         \closedcycle;

     \end{axis}

   \begin{axis}
     [name=plot3, at=(plot2.south east), anchor=south west, xshift=0.8cm,title= {(c) Adv. sampling~(Alg.~\ref{algorithm:OGD}, $\lambda = \epsilon$)},xlabel={$\epsilon$},  ymin = 0, ymax =100, xtick={0,0.05,0.1,0.15,0.2}, xticklabels={0,0.05,0.1,0.15,0.2}, stack plots=y, area style%legend entries = {\text{$s= 0, y = 1$},\text{$s= 0, y = 1$}, \text{$s= 1, y = 0$}, \text{$s= 1, y = 1$}}, legend pos=south west
     ]
     \addplot table[skip first n=1,x index=1, y index=2, col sep=comma] {"plot2/subgroups/newregnf-1.0-subgroups.csv"}
     \closedcycle;

       \addplot table[skip first n=1,x index=1, y index=3, col sep=comma] {"plot2/subgroups/newregnf-1.0-subgroups.csv"}
       \closedcycle;

       \addplot table[skip first n=1,x index=1, y index=4, col sep=comma] {"plot2/subgroups/newregnf-1.0-subgroups.csv"}
       \closedcycle;

       \addplot table[skip first n=1,x index=1, y index=5, col sep=comma] {"plot2/subgroups/newregnf-1.0-subgroups.csv"}
       \closedcycle;

   \end{axis}

     \begin{axis}
       [name=plot4, at=(plot1.south west), anchor=north west, yshift=-1.7cm,title={(d) Adv. labeling~(Alg.~\ref{algorithm:minmax-fair}, $\lambda = 0$) \cite{steinhardt2017certified}},xlabel={$\epsilon$}, ylabel={Percentage},  ymin = 0, ymax =100, xtick={0,0.05,0.1,0.15,0.2}, xticklabels={0,0.05,0.1,0.15,0.2},, stack plots=y, area style]
       \addplot table[skip first n=1,x index=1, y index=2, col sep=comma] {"plot2/subgroups/minmax2-subgroups.csv"}
       \closedcycle;

         \addplot table[skip first n=1,x index=1, y index=3, col sep=comma] {"plot2/subgroups/minmax2-subgroups.csv"}
         \closedcycle;

         \addplot table[skip first n=1,x index=1, y index=4, col sep=comma] {"plot2/subgroups/minmax2-subgroups.csv"}
         \closedcycle;

         \addplot table[skip first n=1,x index=1, y index=5, col sep=comma] {"plot2/subgroups/minmax2-subgroups.csv"}
         \closedcycle;

     \end{axis}

     \begin{axis}
       [name=plot5, at=(plot4.south east), anchor=south west, xshift=0.8cm, legend style={font=\small},title= {(e) Adv. labeling~(Alg.~\ref{algorithm:minmax-fair}, $\lambda = 100\epsilon$)},xlabel={$\epsilon$}, ymin = 0, ymax =100, xtick={0,0.05,0.1,0.15,0.2}, xticklabels={0,0.05,0.1,0.15,0.2}, stack plots=y, area style,
       ]
       \addplot table[skip first n=1,x index=1, y index=2, col sep=comma] {"plot2/subgroups/eominmaxloss-subgroups.csv"}
       \closedcycle;

         \addplot table[skip first n=1,x index=1, y index=3, col sep=comma] {"plot2/subgroups/eominmaxloss-subgroups.csv"}
         \closedcycle;

         \addplot table[skip first n=1,x index=1, y index=4, col sep=comma] {"plot2/subgroups/eominmaxloss-subgroups.csv"}
         \closedcycle;

         \addplot table[skip first n=1,x index=1, y index=5, col sep=comma] {"plot2/subgroups/eominmaxloss-subgroups.csv"}
         \closedcycle;

     \end{axis}

     \begin{axis}
           [name=plot6, at=(plot5.south east), anchor=south west, xshift=0.8cm, legend style={font=\small},title= {(f) Adv. labeling~(Alg.~\ref{algorithm:OGD}, $\lambda = \epsilon$)},xlabel={$\epsilon$},ymin = 0, ymax =100, xtick={0,0.05,0.1,0.15,0.2}, xticklabels={0,0.05,0.1,0.15,0.2}, stack plots=y, area style
           ]
           \addplot table[skip first n=1,x index=1, y index=2, col sep=comma] {"plot2/subgroups/newreg-1.0-subgroups.csv"}
           \closedcycle;

             \addplot table[skip first n=1,x index=1, y index=3, col sep=comma] {"plot2/subgroups/newreg-1.0-subgroups.csv"}
             \closedcycle;

             \addplot table[skip first n=1,x index=1, y index=4, col sep=comma] {"plot2/subgroups/newreg-1.0-subgroups.csv"}
             \closedcycle;

             \addplot table[skip first n=1,x index=1, y index=5, col sep=comma] {"plot2/subgroups/newreg-1.0-subgroups.csv"}
             \closedcycle;

         \end{axis}

     \begin{axis}
       [name=plot7, at=(plot4.south west), anchor=north west, yshift=-1.7cm, legend style={font=\small},title= {(g) Random sampling},ymin = 0, ymax =100, xtick={0,0.05,0.1,0.15,0.2}, ylabel={Percentage},xlabel={$\epsilon$},xticklabels={0,0.05,0.1,0.15,0.2},, stack plots=y, area style]
       \addplot table[skip first n=1,x index=1, y index=2, col sep=comma] {"plot2/subgroups/usel-subgroups.csv"}
       \closedcycle;

         \addplot table[skip first n=1,x index=1, y index=3, col sep=comma] {"plot2/subgroups/usel-subgroups.csv"}
         \closedcycle;

         \addplot table[skip first n=1,x index=1, y index=4, col sep=comma] {"plot2/subgroups/usel-subgroups.csv"}
         \closedcycle;

         \addplot table[skip first n=1,x index=1, y index=5, col sep=comma] {"plot2/subgroups/usel-subgroups.csv"}
         \closedcycle;

     \end{axis}

     \begin{axis}
       [name=plot8, at=(plot7.south east), anchor=south west, xshift=0.8cm,legend style={font=\small},title= {(h) Hard examples}, ymin = 0, ymax =100, xtick={0,0.05,0.1,0.15,0.2}, ,xlabel={$\epsilon$}, xticklabels={0,0.05,0.1,0.15,0.2},, stack plots=y, area style,
       ]
       \addplot table[skip first n=1,x index=1, y index=2, col sep=comma] {"plot2/subgroups/usel-noise-subgroups.csv"}
       \closedcycle;

         \addplot table[skip first n=1,x index=1, y index=3, col sep=comma] {"plot2/subgroups/usel-noise-subgroups.csv"}
         \closedcycle;

         \addplot table[skip first n=1,x index=1, y index=4, col sep=comma] {"plot2/subgroups/usel-noise-subgroups.csv"}
         \closedcycle;

         \addplot table[skip first n=1,x index=1, y index=5, col sep=comma] {"plot2/subgroups/usel-noise-subgroups.csv"}
         \closedcycle;

     \end{axis}

     \begin{axis}
       [name=plot9, at=(plot8.south east), anchor=south west, xshift=0.8cm, legend style={font=\small},title= {(i) Label flipping},  ymin = 0, ymax =100, xtick={0,0.05,0.1,0.15,0.2}, ,xlabel={$\epsilon$}, xticklabels={0,0.05,0.1,0.15,0.2},, stack plots=y, area style,
       ]
       \addplot table[skip first n=1,x index=1, y index=2, col sep=comma] {"plot2/subgroups/uflip-subgroups.csv"}
       \closedcycle;

         \addplot table[skip first n=1,x index=1, y index=3, col sep=comma] {"plot2/subgroups/uflip-subgroups.csv"}
         \closedcycle;

         \addplot table[skip first n=1,x index=1, y index=4, col sep=comma] {"plot2/subgroups/uflip-subgroups.csv"}
         \closedcycle;

         \addplot table[skip first n=1,x index=1, y index=5, col sep=comma] {"plot2/subgroups/uflip-subgroups.csv"}
         \closedcycle;

     \end{axis}

     \end{tikzpicture}
     }

}
\caption{Distribution of the poisoning data under adversarial attacks -- COMPAS dataset. We report the protected attribute ($s=0$ for whites and $s=1$ for blacks) and labels of the poisoning data for various $\epsilon$. For every value of $\epsilon$, the number for each combination of the protected attribute and label reflects the percentage of points with this combination in the poisoning data.
}
\label{fig:subgroups}
\end{figure*}
% !TEX root = ../main.tex

\begin{figure*}[ht]
 \centering
 \resizebox{\columnwidth}{!}{%
\large{
     \begin{tikzpicture}
       \begin{axis}
         [name=plot1,title= {(a) Adv. sampling~(Alg.~\ref{algorithm:minmax-fair}, $\lambda = 0$) \cite{steinhardt2017certified}},xlabel={$\epsilon$}, ymin = 0, ymax =100, xtick={0,0.05,0.1,0.15,0.2}, ylabel={Percentage}, xticklabels={0,0.05,0.1,0.15,0.2},, stack plots=y, area style,legend entries = {\text{$s= 0, y = -$},\text{$s= 0, y = +$}, \text{$s= 1, y = -$}, \text{$s= 1, y = +$}}, legend pos=south west]
         \addplot table[skip first n=1,x index=1, y index=2, col sep=comma] {"plot2/subgroups_adults/nfminmax-subgroups-adults.csv"}
         \closedcycle;

           \addplot table[skip first n=1,x index=1, y index=3, col sep=comma] {"plot2/subgroups_adults/nfminmax-subgroups-adults.csv"}
           \closedcycle;

           \addplot table[skip first n=1,x index=1, y index=4, col sep=comma] {"plot2/subgroups_adults/nfminmax-subgroups-adults.csv"}
           \closedcycle;

           \addplot table[skip first n=1,x index=1, y index=5, col sep=comma] {"plot2/subgroups_adults/nfminmax-subgroups-adults.csv"}
           \closedcycle;

       \end{axis}

     \begin{axis}
       [name=plot2, at=(plot1.south east), anchor=south west, xshift=0.8cm, legend style={font=\small},title= {(b) Adv. sampling~(Alg.~\ref{algorithm:minmax-fair}, $\lambda = 100\epsilon$)},xlabel={$\epsilon$}, ymin = 0, ymax =100, xtick={0,0.05,0.1,0.15,0.2}, xticklabels={0,0.05,0.1,0.15,0.2}, stack plots=y, area style,
       ]
       \addplot table[skip first n=1,x index=1, y index=2, col sep=comma] {"plot2/subgroups_adults/eominmaxlossnf-subgroups-adults.csv"}
       \closedcycle;

         \addplot table[skip first n=1,x index=1, y index=3, col sep=comma] {"plot2/subgroups_adults/eominmaxlossnf-subgroups-adults.csv"}
         \closedcycle;

         \addplot table[skip first n=1,x index=1, y index=4, col sep=comma] {"plot2/subgroups_adults/eominmaxlossnf-subgroups-adults.csv"}
         \closedcycle;

         \addplot table[skip first n=1,x index=1, y index=5, col sep=comma] {"plot2/subgroups_adults/eominmaxlossnf-subgroups-adults.csv"}
         \closedcycle;

     \end{axis}

     \begin{axis}
       [name=plot3, at=(plot2.south east), anchor=south west, xshift=0.8cm, legend style={font=\small},title= {(c) Adv. sampling~(Alg.~\ref{algorithm:OGD}), $\lambda = 0.1 \epsilon$},xlabel={$\epsilon$},  ymin = 0, ymax =100, xtick={0,0.05,0.1,0.15,0.2}, xticklabels={0,0.05,0.1,0.15,0.2},, stack plots=y, area style
       ]
       \addplot table[skip first n=1,x index=1, y index=2, col sep=comma] {"plot2/subgroups_adults/newregnf-0.1-subgroups-adults.csv"}
       \closedcycle;

         \addplot table[skip first n=1,x index=1, y index=3, col sep=comma] {"plot2/subgroups_adults/newregnf-0.1-subgroups-adults.csv"}
         \closedcycle;

         \addplot table[skip first n=1,x index=1, y index=4, col sep=comma] {"plot2/subgroups_adults/newregnf-0.1-subgroups-adults.csv"}
         \closedcycle;

         \addplot table[skip first n=1,x index=1, y index=5, col sep=comma] {"plot2/subgroups_adults/newregnf-0.1-subgroups-adults.csv"}
         \closedcycle;

     \end{axis}

     \begin{axis}
       [name=plot4, at=(plot1.south west), anchor=north west, yshift=-1.7cm,title= {(d) Adv. labeling~(Alg.~\ref{algorithm:minmax-fair}, $\lambda = 0$) \cite{steinhardt2017certified}},xlabel={$\epsilon$}, ymin = 0, ymax =100, xtick={0,0.05,0.1,0.15,0.2}, ylabel={Percentage}, xticklabels={0,0.05,0.1,0.15,0.2},, stack plots=y, area style]
       \addplot table[skip first n=1,x index=1, y index=2, col sep=comma] {"plot2/subgroups_adults/minmax2-subgroups-adults.csv"}
       \closedcycle;

         \addplot table[skip first n=1,x index=1, y index=3, col sep=comma] {"plot2/subgroups_adults/minmax2-subgroups-adults.csv"}
         \closedcycle;

         \addplot table[skip first n=1,x index=1, y index=4, col sep=comma] {"plot2/subgroups_adults/minmax2-subgroups-adults.csv"}
         \closedcycle;

         \addplot table[skip first n=1,x index=1, y index=5, col sep=comma] {"plot2/subgroups_adults/minmax2-subgroups-adults.csv"}
         \closedcycle;

     \end{axis}

     \begin{axis}
       [name=plot5, at=(plot4.south east), anchor=south west, xshift=0.8cm, legend style={font=\small},title= {(e) Adv. labeling~(Alg.~\ref{algorithm:minmax-fair}, $\lambda = 100\epsilon$)},xlabel={$\epsilon$}, ymin = 0, ymax =100, xtick={0,0.05,0.1,0.15,0.2}, xticklabels={0,0.05,0.1,0.15,0.2}, stack plots=y, area style,
       ]
       \addplot table[skip first n=1,x index=1, y index=2, col sep=comma] {"plot2/subgroups_adults/eominmaxloss-subgroups-adults.csv"}
       \closedcycle;

         \addplot table[skip first n=1,x index=1, y index=3, col sep=comma] {"plot2/subgroups_adults/eominmaxloss-subgroups-adults.csv"}
         \closedcycle;

         \addplot table[skip first n=1,x index=1, y index=4, col sep=comma] {"plot2/subgroups_adults/eominmaxloss-subgroups-adults.csv"}
         \closedcycle;

         \addplot table[skip first n=1,x index=1, y index=5, col sep=comma] {"plot2/subgroups_adults/eominmaxloss-subgroups-adults.csv"}
         \closedcycle;

     \end{axis}

     \begin{axis}
      [name=plot6, at=(plot5.south east), anchor=south west, xshift=0.8cm, legend style={font=\small},title= {(f) Adv. labeling~(Alg.~\ref{algorithm:OGD}), $\lambda = 0.1 \epsilon$},xlabel={$\epsilon$},  ymin = 0, ymax =100, xtick={0,0.05,0.1,0.15,0.2}, xticklabels={0,0.05,0.1,0.15,0.2},, stack plots=y, area style
      ]
      \addplot table[skip first n=1,x index=1, y index=2, col sep=comma] {"plot2/subgroups_adults/newreg-0.1-subgroups-adults.csv"}
      \closedcycle;

        \addplot table[skip first n=1,x index=1, y index=3, col sep=comma] {"plot2/subgroups_adults/newreg-0.1-subgroups-adults.csv"}
        \closedcycle;

        \addplot table[skip first n=1,x index=1, y index=4, col sep=comma] {"plot2/subgroups_adults/newreg-0.1-subgroups-adults.csv"}
        \closedcycle;

        \addplot table[skip first n=1,x index=1, y index=5, col sep=comma] {"plot2/subgroups_adults/newreg-0.1-subgroups-adults.csv"}
        \closedcycle;

    \end{axis}

     \begin{axis}
       [name=plot7, at=(plot4.south west), anchor=north west, yshift=-1.7cm, legend style={font=\small},title= {(g) Random sampling},ymin = 0, ymax =100, xtick={0,0.05,0.1,0.15,0.2},ylabel={Percentage} ,xlabel={$\epsilon$},xticklabels={0,0.05,0.1,0.15,0.2},, stack plots=y, area style]
       \addplot table[skip first n=1,x index=1, y index=2, col sep=comma] {"plot2/subgroups_adults/usel-subgroups-adults.csv"}
       \closedcycle;

         \addplot table[skip first n=1,x index=1, y index=3, col sep=comma] {"plot2/subgroups_adults/usel-subgroups-adults.csv"}
         \closedcycle;

         \addplot table[skip first n=1,x index=1, y index=4, col sep=comma] {"plot2/subgroups_adults/usel-subgroups-adults.csv"}
         \closedcycle;

         \addplot table[skip first n=1,x index=1, y index=5, col sep=comma] {"plot2/subgroups_adults/usel-subgroups-adults.csv"}
         \closedcycle;

     \end{axis}

      \begin{axis}
       [name=plot8, at=(plot7.south east), anchor=south west, xshift=0.8cm,  legend style={font=\small},title= {(h) Hard examples},ymin = 0, ymax =100, xtick={0,0.05,0.1,0.15,0.2}, ,xlabel={$\epsilon$},xticklabels={0,0.05,0.1,0.15,0.2},, stack plots=y, area style]
       \addplot table[skip first n=1,x index=1, y index=2, col sep=comma] {"plot2/subgroups_adults/usel-subgroups-adults.csv"}
       \closedcycle;

         \addplot table[skip first n=1,x index=1, y index=3, col sep=comma] {"plot2/subgroups_adults/usel-noise-subgroups-adults.csv"}
         \closedcycle;

         \addplot table[skip first n=1,x index=1, y index=4, col sep=comma] {"plot2/subgroups_adults/usel-noise-subgroups-adults.csv"}
         \closedcycle;

         \addplot table[skip first n=1,x index=1, y index=5, col sep=comma] {"plot2/subgroups_adults/usel-noise-subgroups-adults.csv"}
         \closedcycle;

     \end{axis}
     \begin{axis}
       [name=plot9, at=(plot8.south east), anchor=south west, xshift=0.8cm, legend style={font=\small},title= {(i) Label flipping}, ymin = 0, ymax =100, xtick={0,0.05,0.1,0.15,0.2}, ,xlabel={$\epsilon$}, xticklabels={0,0.05,0.1,0.15,0.2},, stack plots=y, area style,
       ]
       \addplot table[skip first n=1,x index=1, y index=2, col sep=comma] {"plot2/subgroups_adults/uflip-subgroups-adults.csv"}
       \closedcycle;

         \addplot table[skip first n=1,x index=1, y index=3, col sep=comma] {"plot2/subgroups_adults/uflip-subgroups-adults.csv"}
         \closedcycle;

         \addplot table[skip first n=1,x index=1, y index=4, col sep=comma] {"plot2/subgroups_adults/uflip-subgroups-adults.csv"}
         \closedcycle;

         \addplot table[skip first n=1,x index=1, y index=5, col sep=comma] {"plot2/subgroups_adults/uflip-subgroups-adults.csv"}
         \closedcycle;

     \end{axis}

     \end{tikzpicture}
}}
\caption{\small{Distribution of the poisoning data under adversarial attacks -- Adult dataset. We report the protected attribute ($s=0$ for males and $s=1$ for females) and labels of the poisoning data for various $\epsilon$. For every value of $\epsilon$, the number for each combination of the protected attribute and label reflects the percentage of points with this combination in the poisoning data.
}}
\label{fig:subgroups-adult}
\end{figure*}


\begin{thebibliography}{41}
\providecommand{\natexlab}[1]{#1}
\providecommand{\url}[1]{\texttt{#1}}
\expandafter\ifx\csname urlstyle\endcsname\relax
  \providecommand{\doi}[1]{doi: #1}\else
  \providecommand{\doi}{doi: \begingroup \urlstyle{rm}\Url}\fi

\bibitem[Agarwal et~al.(2018)Agarwal, Beygelzimer, Dudik, Langford, and
  Wallach]{agarwal2018reductions}
Alekh Agarwal, Alina Beygelzimer, Miroslav Dudik, John Langford, and Hanna
  Wallach.
\newblock A reductions approach to fair classification.
\newblock 2018.

\bibitem[Bard(1991)]{bard1991some}
Jonathan~F Bard.
\newblock Some properties of the bilevel programming problem.
\newblock \emph{Journal of optimization theory and applications}, 68\penalty0
  (2):\penalty0 371--378, 1991.

\bibitem[Biggio et~al.(2012)Biggio, Nelson, and Laskov]{biggio2012poisoning}
Battista Biggio, Blaine Nelson, and Pavel Laskov.
\newblock Poisoning attacks against support vector machines.
\newblock \emph{arXiv preprint arXiv:1206.6389}, 2012.

\bibitem[Blum and Stangl(2019)]{blum2019recovering}
Avrim Blum and Kevin Stangl.
\newblock Recovering from biased data: Can fairness constraints improve
  accuracy?
\newblock \emph{arXiv preprint arXiv:1912.01094}, 2019.

\bibitem[Boyd et~al.(2004)Boyd, Boyd, and Vandenberghe]{boyd2004convex}
Stephen Boyd, Stephen~P Boyd, and Lieven Vandenberghe.
\newblock \emph{Convex optimization}.
\newblock Cambridge university press, 2004.

\bibitem[Burkard and Lagesse(2017)]{burkard2017analysis}
Cody Burkard and Brent Lagesse.
\newblock Analysis of causative attacks against svms learning from data
  streams.
\newblock In \emph{Proceedings of the 3rd ACM on International Workshop on
  Security And Privacy Analytics}, pages 31--36, 2017.

\bibitem[Calders and {\v{Z}}liobait{\.e}(2013)]{calders2013unbiased}
Toon Calders and Indr{\.e} {\v{Z}}liobait{\.e}.
\newblock Why unbiased computational processes can lead to discriminative
  decision procedures.
\newblock In \emph{Discrimination and privacy in the information society},
  pages 43--57. Springer, 2013.

\bibitem[Calders et~al.(2009)Calders, Kamiran, and
  Pechenizkiy]{calders2009building}
Toon Calders, Faisal Kamiran, and Mykola Pechenizkiy.
\newblock Building classifiers with independency constraints.
\newblock In \emph{2009 IEEE International Conference on Data Mining
  Workshops}, pages 13--18. IEEE, 2009.

\bibitem[Chen et~al.(2017)Chen, Liu, Li, Lu, and Song]{chen2017targeted}
Xinyun Chen, Chang Liu, Bo~Li, Kimberly Lu, and Dawn Song.
\newblock Targeted backdoor attacks on deep learning systems using data
  poisoning.
\newblock \emph{arXiv preprint arXiv:1712.05526}, 2017.

\bibitem[Corbett-Davies et~al.(2017)Corbett-Davies, Pierson, Feller, Goel, and
  Huq]{corbett2017algorithmic}
Sam Corbett-Davies, Emma Pierson, Avi Feller, Sharad Goel, and Aziz Huq.
\newblock Algorithmic decision making and the cost of fairness.
\newblock In \emph{Proceedings of the 23rd ACM SIGKDD International Conference
  on Knowledge Discovery and Data Mining}, pages 797--806, 2017.

\bibitem[De-Arteaga et~al.(2018)De-Arteaga, Dubrawski, and
  Chouldechova]{de2018learning}
Maria De-Arteaga, Artur Dubrawski, and Alexandra Chouldechova.
\newblock Learning under selective labels in the presence of expert
  consistency.
\newblock \emph{arXiv preprint arXiv:1807.00905}, 2018.

\bibitem[Deng(1998)]{deng1998complexity}
Xiaotie Deng.
\newblock Complexity issues in bilevel linear programming.
\newblock In \emph{Multilevel optimization: Algorithms and applications}, pages
  149--164. Springer, 1998.

\bibitem[Donini et~al.(2018)Donini, Oneto, Ben-David, Shawe-Taylor, and
  Pontil]{donini2018empirical}
Michele Donini, Luca Oneto, Shai Ben-David, John~S Shawe-Taylor, and
  Massimiliano Pontil.
\newblock Empirical risk minimization under fairness constraints.
\newblock In \emph{Advances in Neural Information Processing Systems}, pages
  2791--2801, 2018.

\bibitem[Dua and Graff(2017)]{adultdataset}
Dheeru Dua and Casey Graff.
\newblock {UCI} machine learning repository, 2017.
\newblock URL \url{http://archive.ics.uci.edu/ml}.

\bibitem[Dwork et~al.(2012)Dwork, Hardt, Pitassi, Reingold, and
  Zemel]{dwork2012fairness}
Cynthia Dwork, Moritz Hardt, Toniann Pitassi, Omer Reingold, and Richard Zemel.
\newblock Fairness through awareness.
\newblock In \emph{Innovations in Theoretical Computer Science (ITCS)}, pages
  214--226, 2012.

\bibitem[Dwork et~al.(2018)Dwork, Immorlica, Kalai, and
  Leiserson]{dwork2018decoupled}
Cynthia Dwork, Nicole Immorlica, Adam~Tauman Kalai, and Max Leiserson.
\newblock {Decoupled Classifiers for Group-Fair and Efficient Machine
  Learning}.
\newblock In \emph{Fairness, Accountability and Transparency (FAT)}, pages
  119--133, 2018.

\bibitem[Gu et~al.(2017)Gu, Dolan-Gavitt, and Garg]{gu2017badnets}
Tianyu Gu, Brendan Dolan-Gavitt, and Siddharth Garg.
\newblock Badnets: Identifying vulnerabilities in the machine learning model
  supply chain.
\newblock \emph{arXiv preprint arXiv:1708.06733}, 2017.

\bibitem[Hansen et~al.(1992)Hansen, Jaumard, and Savard]{hansen1992new}
Pierre Hansen, Brigitte Jaumard, and Gilles Savard.
\newblock New branch-and-bound rules for linear bilevel programming.
\newblock \emph{SIAM Journal on scientific and Statistical Computing},
  13\penalty0 (5):\penalty0 1194--1217, 1992.

\bibitem[Hardt et~al.(2016)Hardt, Price, Srebro, et~al.]{hardt2016equality}
Moritz Hardt, Eric Price, Nati Srebro, et~al.
\newblock Equality of opportunity in supervised learning.
\newblock In \emph{Advances in neural information processing systems}, pages
  3315--3323, 2016.

\bibitem[Hazan(2016)]{hazan2016introduction}
Elad Hazan.
\newblock Introduction to online convex optimization.
\newblock \emph{Foundations and Trends in Optimization}, 2\penalty0
  (3-4):\penalty0 157--325, 2016.

\bibitem[Jagielski et~al.(2018)Jagielski, Oprea, Biggio, Liu, Nita-Rotaru, and
  Li]{jagielski2018manipulating}
Matthew Jagielski, Alina Oprea, Battista Biggio, Chang Liu, Cristina
  Nita-Rotaru, and Bo~Li.
\newblock Manipulating machine learning: Poisoning attacks and countermeasures
  for regression learning.
\newblock In \emph{2018 IEEE Symposium on Security and Privacy (SP)}, pages
  19--35. IEEE, 2018.

\bibitem[Jiang and Nachum(2019)]{jiang2019identifying}
Heinrich Jiang and Ofir Nachum.
\newblock Identifying and correcting label bias in machine learning.
\newblock \emph{arXiv preprint arXiv:1901.04966}, 2019.

\bibitem[Kallus and Zhou(2018)]{kallus2018residual}
Nathan Kallus and Angela Zhou.
\newblock Residual unfairness in fair machine learning from prejudiced data.
\newblock \emph{arXiv preprint arXiv:1806.02887}, 2018.

\bibitem[Kamishima et~al.(2011)Kamishima, Akaho, and
  Sakuma]{kamishima2011fairness}
Toshihiro Kamishima, Shotaro Akaho, and Jun Sakuma.
\newblock Fairness-aware learning through regularization approach.
\newblock In \emph{2011 IEEE 11th International Conference on Data Mining
  Workshops}, pages 643--650. IEEE, 2011.

\bibitem[Kleinberg et~al.()Kleinberg, Mullainathan, and
  Raghavan]{keinberg2017inherent}
Jon~M. Kleinberg, Sendhil Mullainathan, and "~Manish Raghavan.
\newblock Inherent trade-offs in the fair determination of risk scores.

\bibitem[Koh and Liang(2017)]{koh2017understanding}
Pang~Wei Koh and Percy Liang.
\newblock Understanding black-box predictions via influence functions.
\newblock In \emph{Proceedings of the 34th International Conference on Machine
  Learning-Volume 70}, pages 1885--1894. JMLR. org, 2017.

\bibitem[Koh et~al.(2018)Koh, Steinhardt, and Liang]{koh2018stronger}
Pang~Wei Koh, Jacob Steinhardt, and Percy Liang.
\newblock Stronger data poisoning attacks break data sanitization defenses.
\newblock \emph{arXiv preprint arXiv:1811.00741}, 2018.

\bibitem[Kusner et~al.(2017)Kusner, Loftus, Russell, and
  Silva]{kusner2017counterfactual}
Matt~J Kusner, Joshua Loftus, Chris Russell, and Ricardo Silva.
\newblock Counterfactual fairness.
\newblock In \emph{Advances in Neural Information Processing Systems}, pages
  4066--4076, 2017.

\bibitem[Lamy et~al.(2019)Lamy, Zhong, Menon, and Verma]{lamy2019noise}
Alex Lamy, Ziyuan Zhong, Aditya~K Menon, and Nakul Verma.
\newblock Noise-tolerant fair classification.
\newblock In \emph{Advances in Neural Information Processing Systems}, pages
  294--305, 2019.

\bibitem[Larson et~al.(2017)Larson, Mattu, Kirchner, and Angwin]{compasdataset}
J.~Larson, S.~Mattu, L.~Kirchner, and J.~Angwin.
\newblock {COMPAS dataset}.
\newblock \url{https://github.com/propublica/compas-analysis}, 2017.
\newblock [COMPAS dataset (2017)].

\bibitem[Li et~al.(2016)Li, Wang, Singh, and Vorobeychik]{li2016data}
Bo~Li, Yining Wang, Aarti Singh, and Yevgeniy Vorobeychik.
\newblock Data poisoning attacks on factorization-based collaborative
  filtering.
\newblock In \emph{Advances in neural information processing systems}, pages
  1885--1893, 2016.

\bibitem[Madras et~al.(2018)Madras, Creager, Pitassi, and
  Zemel]{madras2018learning}
David Madras, Elliot Creager, Toniann Pitassi, and Richard Zemel.
\newblock Learning adversarially fair and transferable representations.
\newblock \emph{arXiv preprint arXiv:1802.06309}, 2018.

\bibitem[Mehrabi et~al.(2019)Mehrabi, Morstatter, Saxena, Lerman, and
  Galstyan]{mehrabi2019survey}
Ninareh Mehrabi, Fred Morstatter, Nripsuta Saxena, Kristina Lerman, and Aram
  Galstyan.
\newblock A survey on bias and fairness in machine learning.
\newblock \emph{arXiv preprint arXiv:1908.09635}, 2019.

\bibitem[Mei and Zhu(2015{\natexlab{a}})]{mei2015security}
Shike Mei and Xiaojin Zhu.
\newblock The security of latent dirichlet allocation.
\newblock In \emph{Artificial Intelligence and Statistics}, pages 681--689,
  2015{\natexlab{a}}.

\bibitem[Mei and Zhu(2015{\natexlab{b}})]{mei2015using}
Shike Mei and Xiaojin Zhu.
\newblock Using machine teaching to identify optimal training-set attacks on
  machine learners.
\newblock In \emph{Twenty-Ninth AAAI Conference on Artificial Intelligence},
  2015{\natexlab{b}}.

\bibitem[Shafahi et~al.(2018)Shafahi, Huang, Najibi, Suciu, Studer, Dumitras,
  and Goldstein]{shafahi2018poison}
Ali Shafahi, W~Ronny Huang, Mahyar Najibi, Octavian Suciu, Christoph Studer,
  Tudor Dumitras, and Tom Goldstein.
\newblock Poison frogs! targeted clean-label poisoning attacks on neural
  networks.
\newblock In \emph{Advances in Neural Information Processing Systems}, pages
  6103--6113, 2018.

\bibitem[Steinhardt et~al.(2017)Steinhardt, Koh, and
  Liang]{steinhardt2017certified}
Jacob Steinhardt, Pang Wei~W Koh, and Percy~S Liang.
\newblock Certified defenses for data poisoning attacks.
\newblock In \emph{Advances in neural information processing systems}, pages
  3517--3529, 2017.

\bibitem[Suciu et~al.(2018)Suciu, Marginean, Kaya, Daume~III, and
  Dumitras]{suciu2018does}
Octavian Suciu, Radu Marginean, Yigitcan Kaya, Hal Daume~III, and Tudor
  Dumitras.
\newblock When does machine learning $\{$FAIL$\}$? generalized transferability
  for evasion and poisoning attacks.
\newblock In \emph{27th $\{$USENIX$\}$ Security Symposium ($\{$USENIX$\}$
  Security 18)}, pages 1299--1316, 2018.

\bibitem[Zafar et~al.(2015)Zafar, Valera, Rodriguez, and
  Gummadi]{zafar2015fairness}
Muhammad~Bilal Zafar, Isabel Valera, Manuel~Gomez Rodriguez, and Krishna~P
  Gummadi.
\newblock Fairness constraints: Mechanisms for fair classification.
\newblock \emph{arXiv preprint arXiv:1507.05259}, 2015.

\bibitem[Zafar et~al.(2017)Zafar, Valera, Gomez~Rodriguez, and
  Gummadi]{zafar2017fairness}
Muhammad~Bilal Zafar, Isabel Valera, Manuel Gomez~Rodriguez, and Krishna~P
  Gummadi.
\newblock Fairness beyond disparate treatment \& disparate impact: Learning
  classification without disparate mistreatment.
\newblock In \emph{Proceedings of the 26th international conference on world
  wide web}, pages 1171--1180, 2017.

\bibitem[Zemel et~al.(2013)Zemel, Wu, Swersky, Pitassi, and
  Dwork]{zemel2013learning}
Rich Zemel, Yu~Wu, Kevin Swersky, Toni Pitassi, and Cynthia Dwork.
\newblock Learning fair representations.
\newblock In \emph{International Conference on Machine Learning}, pages
  325--333, 2013.

\end{thebibliography}
\end{document}